\documentclass[12pt]{article}

\RequirePackage{kpfonts}
\RequirePackage[sf,bf,small,raggedright]{titlesec}
\usepackage[a4paper, margin=2cm]{geometry}

\usepackage{dsfont}
\usepackage{comment}
\usepackage{textcomp}
\usepackage{amssymb}
\usepackage{bbm}
\usepackage{fancyhdr}
\usepackage{amsmath}
\usepackage{amsbsy,amsthm}
\usepackage{amscd}
\usepackage{latexsym}
\usepackage{graphicx}   % to include and manipulate graphics
\usepackage{pdfsync}
\usepackage{blkarray}
\usepackage{multirow}
\usepackage{hyperref}
\usepackage{xcolor}
\usepackage{enumerate}

\usepackage{tcolorbox}

\usepackage{tikz}

\usepackage{subcaption}
\usepackage{algorithmic}
\usepackage[justification=centering]{caption}

\usepackage{natbib}
 \bibpunct[, ]{[}{]}{,}{n}{}{,}%

\usepackage{float}
\usepackage{pgf}
\usetikzlibrary{arrows}

\usepackage{enumitem}

\usepackage{fancyhdr}
\newtheorem{theorem}{Theorem}[section]
\newtheorem{corollary}[theorem]{Corollary}
\newtheorem{lemma}[theorem]{Lemma}

\newtheorem{definition}[theorem]{Definition}

\newtheorem{def/prop}[theorem]{Definition/Proposition}

\newcommand{\R}{\mathbf{R}}

\newcommand{\prob}[1]{\mathbf{P}\left( #1 \right)}
\newcommand{\NormOp}[1]{\left\vert\kern-0.25ex\left\vert\kern-0.25ex\left\vert #1 \right\vert\kern-0.25ex\right\vert\kern-0.25ex\right\vert}

\newcommand{\cov}{\mathrm{Cov}}
\newcommand{\cip}{\mbox{\,$\perp\!\!\!\perp$\,}}
\newcommand{\indep}{\cip}

\DeclareMathOperator{\polylog}{polylog} 

\newcommand{\diam}[1]{\mathrm{diam}\!\left(#1\right)}
\newcommand{\maxdeg}[1]{\max\!\deg\left(#1\right)}

\newcommand{\degree}{\partial}

\title{Testing properties of trees in graphical models with covariance queries}

\author{Sofiya Burova 
\thanks{
IMTECH and Departament de Matem\`atiques,  Universitat Polit\`ecnica de Catalunya, Barcelona, Spain 
and 
Department of Economics and Business, Pompeu Fabra University, Barcelona, Spain.
Research of S.B. supported by the Spanish Agencia Estatal de Investigaci\'on under project PID2022-138268NB-100 and Grant PID2020-113082GB-I00 funded by MICIU/AEI/10.13039/501100011033.}
\and Francisco Calvillo 
\thanks{LPSM, Sorbonne Université, 4 Place Jussieu, 75005 Paris, France}
\and Gábor Lugosi  
\thanks{ICREA, Pg. Llu\'is Companys 23, 08010 Barcelona, Spain;
Department of Economics and Business, Pompeu Fabra University, Barcelona, Spain; 
Barcelona School of Economics.
Research of G.L. and P.Z. is supported by the Spanish Ministry of Economy and Competitiveness, Grant PGC2018-101643-B-I00 and FEDER, EU.}
\and Piotr Zwiernik
\thanks{Department of Economics and Business, Pompeu Fabra University, Barcelona, Spain; 
Barcelona School of Economics.}
}
\date{}

\begin{document}

\maketitle
\begin{abstract}
We consider the problem of testing properties of graphs underlying high-dimensional graphical models. We adopt the model of \emph{covariance queries} introduced by \cite{lugosi2021learning}. 
We study the case when the underlying graph is a tree. 
The main results of the paper show that, while reconstructing the entire tree may be costly, certain global structural properties can be tested efficiently. 
In particular, we design randomized tests for global structural properties that  use a sub-quadratic number of queries. We develop testing procedures for several fundamental properties, including the number of leaves, the maximum degree, the typical distance, and the diameter of the tree. 
%Our methods rely on random sampling, subtree reconstruction, and concentration inequalities for $U$-statistics and related estimators.
For each property, we obtain explicit query complexity bounds that depend on the target threshold and tolerance parameters. 

%We motivate this work by framing it in the  context of Gaussian/binary probabilistic graphical models on trees, where a simple transformation on correlations maps to the given setting. 
\end{abstract}

\section{Introduction}

Graphical models provide a powerful framework for representing dependencies among random variables through the structure of a graph \citep{lauritzen1996graphical}. Each node corresponds to a random variable, while edges encode direct statistical relationships or conditional dependencies. By making these relationships explicit, graphical models offer a compact and interpretable representation of complex high-dimensional distributions. 

Gaussian graphical models are particularly popular and useful due to their simplicity and mathematical tractability. To define a Gaussian graphical model, let $G=([n],E)$ be a graph with $n$ nodes, $[n]=\{1,\ldots,n\}$, and edge set $E$. Denote by $\mathcal{S}^n_+$ the set of symmetric positive definite $n\times n$ matrices.
In Gaussian graphical models it is assumed that the covariance matrix $\Sigma=\cov(X)$ of an $n$-dimensional Gaussian vector $X=(X_1,\ldots,X_n)$ lies
in the set
\begin{equation*}
M(G)\;=\;\{\Sigma\in \mathcal{S}^n_+: (\Sigma^{-1})_{ij}=0 \mbox{ if }i\neq j\mbox{ and }ij\notin E\}.	
\end{equation*}
for some graph $G$. A key property of Gaussian distributions is the fact that
zero entries of the \emph{inverse} covariance matrix indicate conditional independence, that is,
$$
(\Sigma^{-1})_{ij}=0\quad\Longleftrightarrow\quad  X^{(i)}\indep X^{(j)}|X^{([n]\setminus \{i,j\})}~.
$$

Learning the graph structure underlying probabilistic graphical models from data
is a problem with a long history; see \citet{drton2017structure} for a recent exposition. 
In the standard setup, one has access to independent draws of the random vector $X$.
The amount to data required for learning the
underlying graph is well understood. All methods for learning the graph structure require computing the sample covariance; see, for example, \citet{cai2016estimating,dasarathy2016active}. 
However, in a growing number of applications, the number of variables $n$ is so
large that a computational cost of order $n^2$ becomes
prohibitive. This means that even writing down or storing 
an estimate of the covariance matrix  is not feasible.
In order to remedy this problem,
\citet{lugosi2021learning} introduced 
a ``covariance query'' model, in which the statistician can adaptively query the entries of the covariance matrix
$\Sigma$ whose inverse encodes the underlying graph. The goal of the statistician is to assess properties
of the graph with as few queries as possible.
The main result of  \citet{lugosi2021learning} is that if the underlying graph is a 
bounded-degree tree (or, more generally, has a small tree-width), then
one can learn the underlying structure by using only $O(n\polylog n)$ queries.

In ultra-high dimensional regimes, exact reconstruction of the full graph is often neither feasible nor essential. This motivates a shift from reconstruction to \emph{property testing}, where the goal is to decide whether the underlying tree satisfies a given structural property without learning the entire model. Early work in this direction includes~\citet{neykov2019combinatorial}, who studied testing problems for high-dimensional Gaussian graphical models based on limited samples. More recently,~\citet{devroye2024property} developed a systematic framework for testing whether the underlying graph is a tree using limited distance queries.

The present work builds directly on this testing perspective. While \cite{devroye2024property} focused on testing trees, and small separation numbers, here we focus on trees and study richer structural questions that are specific to tree metrics and their induced graphical models. Our results further clarify which aspects of tree structure can be inferred reliably from a subquadratic number of distance queries, and which necessarily require full reconstruction. 
In particular, one may be interested in simple questions, such as "Does the tree contain large hubs?", 
"Does the tree contasin a long path?", or "How many leaves does the tree have?". The main message of the paper is that such questions can often be answered effectively, based on a much smaller number of queries than what is needed for a full reconstruction of the tree.

In such \emph{tree-structured graphical models}, the absence of cycles leads to strong factorization and identifiability properties. In several standard settings—including Gaussian graphical models, binary Markov random fields, and linear structural equation models—the dependence between any two variables admits a simple path-based representation: correlations (or suitably transformed correlations) between two nodes are equal to the product of edge-level correlations along the unique path connecting them. Equivalently, taking logarithms converts correlations into additive quantities that define a tree metric. This correspondence allows one to translate questions about graphical models on trees into questions about distances on weighted trees.

This observation motivates an alternative, but equivalent, viewpoint: instead of working directly with distributions or graphical model parameters, one may study the underlying tree through access to \emph{tree distance queries}. Each query reveals the distance between a pair of nodes, where distance is understood as the additive weight induced by edge correlations along the path between them. This perspective is classical in phylogenetics and underlies many distance-based reconstruction methods \citep{lake1994reconstructing,semple2003phylogenetics}, but it also provides a natural abstraction for more general graphical models on trees. Throughout this paper, we adopt this distance-based viewpoint as our primary formal framework, while keeping in mind its direct interpretation in terms of graphical models.

Closely related are \emph{latent tree models}, introduced by~\citet{Pearl88} under the name \emph{tree-decomposable distributions} and developed further by \citet{zhang04}. In latent tree models, only a subset of the variables is observed, while the remaining nodes of the tree correspond to unobserved variables. Such models arise naturally in applications including phylogenetics, network tomography, and hierarchical clustering, and are treated comprehensively by~\citet{Zw18}. The reason why latent tree models are important for our work is that subsampling nodes from a fully observed tree model induces a latent tree structure on the subsampled nodes, linking the two frameworks in a natural way.

%The classical \emph{tree reconstruction} problem asks for exact recovery of the underlying tree given access to all pairwise distances or correlations. In the fully observed setting, this problem is well understood and admits efficient solutions, starting with the seminal work of~\citet{chow1968approximating}. When some variables are unobserved, reconstruction becomes more challenging. In this latent setting,~\citet{choi11} proposed computationally efficient algorithms under suitable structural assumptions, establishing consistency guarantees for recovering the hidden tree.

%A complementary line of work studies tree reconstruction from a query-complexity perspective, where access to distances or correlations is costly. In the fully observed case,~\citet{lugosi2021learning} analyzed the number of covariance or distance queries required for exact tree recovery. In the latent case,~\citet{DeLuZw24} showed that exact reconstruction of a latent tree with maximal degree $\Delta$ requires $O(n \Delta \log_\Delta n)$ distance queries, matching the lower bound of~\citet{king03}. In the absence of degree constraints, the query complexity necessarily becomes quadratic in the number of nodes, even when reconstruction is information-theoretically possible.

Throughout the paper we work with an exact distance oracle. This formulation isolates the query-complexity aspect of the testing problems and avoids mixing it with a separate sample-complexity analysis. In applications to Gaussian tree models, distances are estimated from covariances, and noisy covariance queries can be handled by tolerance arguments as in \cite{lugosi2021learning}. Obtaining explicit finite-sample guarantees would require additional assumptions controlling the relevant margins, such as lower bounds on edge lengths and upper bounds on the diameter. Since our results concern query complexity rather than sample complexity, we do not pursue this extension here.

The paper is organized as follows.
In Section \ref{sec: problem setup}, we describe the property-testing framework considered in this paper, and state the main results.
In Sections \ref{section: Testing diameters}, 
\ref{section: Testing maximal degrees},
\ref{sec: testing leaves}, and
\ref{sec: Testing typical distance},
we prove the main results on testing for the diameter, maximal degree, number of leaves, and typical distance, respectively. 
In Section \ref{sec:recover-TX}, we describe the procedure of recovering the subtree spanned by a subset of vertices in a tree. This procedure is a key component of several of the proposed testing procedures, where the subtree spanned by a randomly sampled small subset of vertices proves to be useful.

\section{Problem setup and main results}\label{sec: problem setup}

In several classical families of graphical models defined on trees, pairwise dependencies admit a simple path-based representation. 
In particular, for Gaussian tree models, binary Markov random fields (Ising models), and linear structural equation models on trees, the correlation between two variables is equal to the product of edge-level correlations along the unique path connecting them.
Equivalently, taking minus the logarithm of squared correlations transforms this multiplicative structure into an additive one. Since, for $\rho\in [-1,1]$, $-\log\rho^2\ge 0$, this  yields a metric on the vertices of the tree.

This correspondence allows one to represent such graphical models through an underlying \emph{tree metric}, abstracting away distributional details and retaining only structural information.
We refer to Section~5 of~\cite{DeLuZw24} for a detailed discussion of this metric perspective and additional examples.
Throughout this paper, we adopt this abstraction and work directly with tree metrics, keeping in mind their interpretation in terms of correlations in tree-structured graphical models.

Formally, a \emph{tree} \(T = (V(T), E(T))\) is a connected undirected graph with no cycles.
Between any two vertices \(u, v \in V(T)\), there exists a unique path, denoted by \(\overline{uv}\).

\begin{definition}
    Let \(T=(V(T),E(T))\) be a tree. 
    A map \(d : V(T)\times V(T) \to \R_{\ge 0}\) is called a \emph{tree metric} on \(T\) if and only if there exists an edge-weight assignment \(\omega : E(T)\to \R_{>0}\) such that for all \(u,v\in V(T)\),
    \[
        d(u,v)=
        \begin{cases}
            \sum_{e\in\overline{uv}}\omega(e) & \text{if } u\neq v,\\
            0 & \text{otherwise}.
        \end{cases}
    \]
\end{definition}

Throughout this work, we consider an unknown tree \(T\) on \(n\) vertices, equipped with an unknown tree metric \(d\).
The tree is accessible only through an oracle that returns exact distance values \(d(u,v)\) for queried pairs \(u,v \in V(T)\).
This oracle model corresponds to the population-level setting in which correlations (or transformed correlations) are known exactly.
Our goal is not to reconstruct the full tree, but to decide whether \(T\) satisfies certain global structural properties using as few distance queries as possible. The \emph{query complexity} of a testing procedure is the total number of queries to the oracle 
that the procedure uses.

We now summarize the main contributions of the paper.  
Throughout this paper, both the diameter \(\diam{T}\) and the typical distance \(\ell_{\mathrm{typ}}\) — defined respectively in Sections~\ref{section: Testing diameters} and~\ref{sec: Testing typical distance} — are understood in terms of the \emph{graph distance} on~\(T\), rather than the metric \(d\) returned by the oracle. 
All the testing procedures considered in this paper follow a common paradigm.
They are based on sampling a \emph{small} subset of vertices uniformly at random from the unknown tree \(T\), and querying the oracle only on pairs where at least one element lies in the sample.
The resulting partial information is then used to infer global structural properties of \(T\), without attempting a full reconstruction.

\subsection{Testing the diameter}\label{sec:diam0}

A diameter of the tree is defined as the length (i.e., the number of vertices) of the longest path in it.  Given a tree $T$ and a positive integer $D$, we consider the hypothesis testing problem
\[
H_{\rm diam}: \quad  \diam{T} \ge D~.
\]
We propose a simple test that uses a small number of distance queries.
The idea behind the proposed test is to exploit the fact that long paths in a tree necessarily intersect many randomly sampled nodes. 
Using only oracle queries between sampled vertices, we can determine, for any sampled pair, how many other sampled vertices lie on the unique path connecting them in \(T\).
When \(\diam{T}\) is large, the sample set is likely to contain a pair of vertices whose connecting path includes a significant fraction of the other sampled vertices.

Following this approach, in Section~\ref{section: Testing diameters}  we propose a testing procedure and prove the following theorem.
\begin{theorem}\label{thm: diameter test}
For any $0<\delta,\epsilon<1$ and integer $D\ge 1$, there exists a randomized testing procedure such that:
\begin{itemize}
    \item [(i)] if $H_{\rm diam}$ holds, the algorithm accepts it with probability at least $1 - \epsilon$, 
    \item [(ii)] if $\mathrm{diam}(T) < (1 - \delta)D$, then $H_{\rm diam}$ is rejected with probability at least $1 - \epsilon$.
\end{itemize}
The query complexity of the procedure is
\[
\Theta\!\left(
    \left(
        \frac{n}{D\delta^2}
        \log\!\left(\frac{n^2}{\epsilon}\right)
    \right)^2
\right)~.
\]
\end{theorem}

\noindent
Observe that, for fixed $\epsilon$ and $\delta$, this testing procedure achieves subquadratic query complexity whenever $D/\log n\to \infty$.
Under this condition, Theorem~\ref{thm: diameter test} guarantees the existence of a diameter test that requires substantially fewer queries than those needed to reconstruct the entire tree. 
The efficiency of this approach is particularly notable when testing for large diameters: when $D$ scales linearly with $n$, the test operates with only a polylogarithmic number of queries.

\subsection{Testing the maximal degree}

Denote the maximal degree of a tree \(T=(V(T),E(T))\) by
$\maxdeg{T}$. Given a tree $T$ and an integer $\Delta \ge 1$, we consider the hypothesis testing problem
\[
H_{\rm deg}: \quad \maxdeg{T} \ge \Delta~.
\]
The main challenge in testing the maximal degree is that high-degree vertices may be rare and difficult to detect through uniform sampling alone.
Our approach is based on efficient reconstruction of the tree spanned by a few randomly sampled nodes. The maximal degree in this subtree serves as a proxy for the maximal degree of~$T$.

This approach is developed in detail in Section~\ref{section: Testing maximal degrees}, where we prove the following theorem.

\begin{theorem}\label{thm: max degree test}
For any $0<\delta,\epsilon<1$ and integer $\Delta \ge 1$, there exists a randomized testing algorithm such that:
\begin{itemize}
    \item [(i)] if $H_{\rm deg}$ holds, the algorithm accepts it with probability at least $1-\epsilon$,
    \item [(ii)] if $\maxdeg{T} < (1-\delta)\Delta$, $H_{\rm deg}$ is rejected with probability at least $1-\epsilon$.
\end{itemize}
The total query complexity is
\[
\Theta\!\left(
\frac{n^2}{\Delta\delta^2}
\log\!\left(\frac{n}{\epsilon}\right)
\right)~.
\]
\end{theorem}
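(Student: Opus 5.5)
The plan is to estimate the degree of every vertex of $T$ at once, by sampling vertices and counting how often a sampled vertex is a \emph{neighbor} of a given vertex. Fix $m = C\,\frac{n}{\Delta\delta^2}\log\!\left(\frac{n}{\epsilon}\right)$ for a suitable constant $C$. Draw a set $S$ of $m$ vertices independently and uniformly at random, and query $d(s,v)$ for every $s\in S$ and every $v\in V(T)$. This costs $mn=\Theta\!\left(\frac{n^2}{\Delta\delta^2}\log(n/\epsilon)\right)$ queries, which is the claimed complexity, and the whole argument is designed so that no further queries are needed.

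For a vertex $x$, set $N_x=\#\{s\in S: s\sim x\}$. Then $N_x$ is a sum of $m$ i.i.d.\ Bernoulli variables with mean $\deg(x)/n$, so $\mathbf{E}N_x = m\deg(x)/n$. The test accepts $H_{\rm deg}$ if and only if
\[
\max_x N_x \ \ge\ \left(1-\tfrac{\delta}{2}\right)\frac{m\Delta}{n}.
\]
Both error probabilities follow from multiplicative Chernoff bounds at the scale $\mu=m\Delta/n = C\delta^{-2}\log(n/\epsilon)$.

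\begin{itemize}
\item Under $H_{\rm deg}$, a vertex $x^*$ with $\deg(x^*)\ge\Delta$ has $\mathbf{P}\left(N_{x^*}<(1-\delta/2)\mu\right)\le e^{-\delta^2\mu/8}\le\epsilon$.
\item If $\maxdeg{T}<(1-\delta)\Delta$, then every $N_x$ is stochastically dominated by a binomial with mean at most $(1-\delta)\mu$. Exceeding $(1-\delta/2)\mu$ is a relative deviation of order $\delta$, so it has probability at most $e^{-c\delta^2\mu}\le \epsilon/n$. A union bound over the $n$ vertices $x$ then gives total failure probability at most $\epsilon$; this union bound is where the $\log(n/\epsilon)$ factor comes from.
\end{itemize}

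For $\Theta$ rather than $O$, I would note that the procedure uses exactly $mn$ queries. If needed, I would also sketch why fewer than order $n/(\Delta\delta^2)$ sample-neighbor incidences cannot separate a degree of $\Delta$ from a degree of $(1-\delta)\Delta$.

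The main obstacle is computing the indicators $\mathbf 1\{s\sim x\}$ from the queried data $d(S,V)$ alone. I would first use the procedure of Section~\ref{sec:recover-TX} to recover the subtree $T_S$ spanned by $S$. Every vertex $v$ then has a well-defined projection onto $T_S$, computable from the Gromov products $\frac12\bigl(d(s,v)+d(s',v)-d(s,s')\bigr)$ with $s,s'\in S$. Next, for a fixed $s$, consider the sets of vertices whose geodesic from $s$ leaves $s$ through the same edge. I expect these to be identifiable, at least for the vertices that matter, by comparing $d(s,\cdot)$ with distances from other samples. Inside each such branch, the neighbor of $s$ is the vertex of minimum distance from $s$ (recall $\omega>0$).

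If this identification turns out to be incomplete for branches at $s$ that contain no other sample, I would use a fallback. For each $x$, count the samples $s$ whose projection structure certifies that $x$ lies on no longer geodesic ending at $s$. Equivalently, require $d(s,x)=\min\{d(s,v): v \text{ in the branch of } x\}$. I would then check that the resulting counting error can be absorbed into the $\delta/2$ slack. I expect this adjacency-certification step to be where most of the care is needed.
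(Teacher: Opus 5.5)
Your statistical skeleton is exactly the paper's: the statistic $\max_x N_x$, the threshold $(1-\delta/2)m\Delta/n$, a lower-tail bound at one vertex of degree at least $\Delta$, an upper-tail bound with a union bound over the $n$ vertices, and $mn$ queries in total. The gap is the step you yourself flag as the main obstacle. You never show that the statistic can be computed from $d(S,V)$, and for the maximum over \emph{all} vertices it cannot be. Let $B$ be a component of $T-s$ containing no sample. Every other sample sees $B$ through $s$, so the queried data on $B$ reduce to $d(s,\cdot)$ restricted to $B$. For instance, suppose $x,y\notin S$. Attaching the path $s-x-y$ (unit weights) at $s$, or attaching $x$ and $y$ as two leaves of $s$ with $\omega(sx)=1$ and $\omega(sy)=2$, gives identical answers to every query in $d(S,V)$, yet $\mathbf{1}\{s\sim y\}$ differs. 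So the branch identification you hope for cannot be completed in sample-free branches. Your fallback (``absorb the counting error into the $\delta/2$ slack'') is left unspecified, so as written the proposal does not yield an implementable test.

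The missing idea, which is how the paper computes its degree statistic, is that only vertices of the spanned subtree matter. If $x$ has two distinct sampled neighbours $s,s'$, then $x\in\overline{ss'}$, so $x\in V(T_S)$. If $x,s\in V(T_S)$ are adjacent in $T$, the edge $sx$ lies in $T_S$, because the $T$-path between two vertices of a subtree stays inside it. Hence $\mathbf{1}\{s\sim x\}$ is read directly off the $T_S$ recovered by Lemma~\ref{lem:TX}.

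So compute $\max_{x\in V(T_S)}N_x$. Once your constant is large enough that the threshold $(1-\delta/2)\mu$ exceeds $1$, this gives the same decision as $\max_x N_x$, and your Chernoff analysis applies verbatim. For this you should either sample without replacement, as the paper does (hypergeometric tails are no worse than binomial ones, by Hoeffding), or count distinct sampled neighbours. Either way, $N_x\ge 2$ then really means two distinct neighbours.

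Incidentally, your branch construction does work for exactly these vertices. The branch at $s$ containing another sample $s'$ is $\{v: d(s,v)+d(s,s')-d(s',v)>0\}$, and the neighbour of $s$ there is the minimiser of $d(s,\cdot)$. No vertex of $V(T_S)\setminus\{s\}$ lies in a sample-free component of $T-s$. So no slack argument is needed, but the restriction to $V(T_S)$ is the observation that closes the proof. Finally, the $\Theta$ in the statement is just the query count $nN$ of the procedure, so no lower-bound sketch is required.
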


\noindent
For fixed $\epsilon$ and $\delta$, this testing procedure achieves subquadratic query complexity whenever $\Delta/\log n \to \infty$.

It is natural to compare the bound on the query complexity with that of the reconstruction algorithm of \cite{DeLuZw24}, which runs in query complexity $O(n\Delta \log n)$. 
That algorithm can be turned into a tester for the maximal degree by simply supplying the threshold $\Delta$ as input: if the procedure succeeds in reconstructing the entire tree, we accept $T$; otherwise, we reject it. 
This reconstruction-based tester outperforms Theorem~\ref{thm: max degree test} whenever $\Delta \le \sqrt{n}$, yielding a query complexity of order $O(n^{3/2}\log n)$. 
For $\Delta \ge \sqrt{n}$, Theorem~\ref{thm: max degree test} becomes more efficient, and in fact also achieves complexity $O(n^{3/2}\log n)$ for fixed $\epsilon$ and $\delta$. 
Consequently, regardless of the threshold~$\Delta$, the maximal degree can always be tested with query complexity
\[
O\!\left(n^{3/2}\log n\right).
\]

\subsection{Testing the number of leaves}

For a tree $T$, denote by
$
\mathcal{L}(T) = \{v \in V(T) : \deg_T(v) = 1\}
$ 
the set of its leaves.
Given an integer $\Lambda \ge 1$, we consider the hypothesis testing problem
\[
H_{\rm leaf}: \quad |\mathcal{L}(T)| \ge \Lambda~.
\]
The proposed strategy for testing the number of leaves is closely related to that used for testing the maximal degree.
We first reconstruct a subtree of $T$ based on a small random sample of vertices.
The leaves of this subtree are taken as candidates for being leaves of the original tree.
A refinement step then verifies, for each candidate, whether it is indeed a leaf of $T$, using additional distance queries.

The details of this procedure are given in Section~\ref{sec: testing leaves}, leading to the following result.
\begin{theorem}\label{thm: leaves test}
For any $0<\delta,\epsilon<1$ and integer $\Lambda \ge 1$, there exists a randomized testing procedure such that:
\begin{itemize}
    \item [(i)] if $H_{\rm leaf}$ holds, the algorithm accepts it with probability at least $1-\epsilon$,
    \item [(ii)] if $|\mathcal{L}(T)| < (1-\delta)\Lambda$, $H_{\rm leaf}$ is rejected with probability at least $1-\epsilon$.
\end{itemize}
The total query complexity is
\[
\Theta\!\left(
\frac{n^2}{\Lambda\delta^2}
\log\!\left(\frac{1}{\epsilon}\right)
\right)~.
\]
\end{theorem}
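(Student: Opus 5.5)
The plan is to estimate the fraction of leaves $p=|\mathcal{L}(T)|/n$ by uniform sampling. Each sampled vertex is certified exactly as leaf or non-leaf with $O(n)$ queries, and the count of certified leaves is compared with a threshold. (A variant that first reconstructs the subtree spanned by the sample, as in Section~\ref{sec:recover-TX}, and only verifies its leaves as candidates could save queries in practice. It is not needed for the stated bound, so I would present the direct version.)

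\textbf{Step 1: deterministic leaf certification.} Fix a vertex $v$ and query $d(v,w)$ for all $w\neq v$, which costs $n-1$ queries. Let $u$ be a vertex minimizing $d(v,\cdot)$.

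First, $u$ is a neighbour of $v$. If not, the path $\overline{vu}$ contains an interior vertex $x$. Since all edge weights are positive, $d(v,x)<d(v,u)$, contradicting minimality.

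Next, query $d(u,w)$ for all $w$, another $n-1$ queries. I claim that $v$ is a leaf if and only if
\[
d(v,w)=d(v,u)+d(u,w)\quad\text{for all } w\neq v.
\]
If $v$ is a leaf, every path leaving $v$ passes through its unique neighbour $u$, so the identity holds. If $v$ has a second neighbour $w\neq u$, then $\overline{vw}$ is the single edge $vw$ and does not contain $u$. Hence $d(u,w)=d(u,v)+d(v,w)>0$, so $d(v,u)+d(u,w)=2d(v,u)+d(v,w)>d(v,w)$ and the identity fails at $w$. The indicator $\mathbf{1}\{v\in\mathcal{L}(T)\}$ is therefore computed exactly with at most $2n$ queries.

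\textbf{Step 2: sampling and the test.} Draw $v_1,\dots,v_m$ independently and uniformly from $V(T)$, and let $S=\sum_i \mathbf{1}\{v_i\in\mathcal{L}(T)\}\sim\mathrm{Bin}(m,p)$. Accept $H_{\rm leaf}$ if and only if $S\ge (1-\delta/2)m\Lambda/n$. Set
\[
m=\left\lceil C\,\frac{n}{\Lambda\delta^2}\log\frac{1}{\epsilon}\right\rceil
\]
for a suitable absolute constant $C$. The total number of queries is then at most $2nm=\Theta\!\left(\frac{n^2}{\Lambda\delta^2}\log\frac1\epsilon\right)$. This count is also attained up to constants, which gives the $\Theta$.

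\textbf{Step 3: error analysis.} This is the only step requiring care; it is a standard multiplicative Chernoff argument, with attention to the two regimes so that the bound depends on $\Lambda/n$ and not on $p$.

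Under $H_{\rm leaf}$, we have $\mu=\mathbf{E}S=mp\ge m\Lambda/n$. The lower tail bound gives
\[
\prob{S<(1-\delta/2)m\Lambda/n}\le \prob{S<(1-\delta/2)\mu}\le e^{-\delta^2\mu/8},
\]
which is at most $\epsilon$ by the choice of $m$.

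If instead $|\mathcal{L}(T)|<(1-\delta)\Lambda$, then $\mu<(1-\delta)m\Lambda/n$. By monotonicity in $p$ it suffices to treat the worst case $p=(1-\delta)\Lambda/n$. The threshold exceeds this mean by the factor $(1-\delta/2)/(1-\delta)\ge 1+\delta/2$. The upper-tail bound $\prob{S\ge(1+\eta)\mu}\le e^{-\eta^2\mu/(2+\eta)}$ with $\eta\ge\delta/2$ then gives probability at most
\[
\exp\!\left(-c\,\delta^2 (1-\delta)\frac{m\Lambda}{n}\right).
\]
The factor $1-\delta$ can be small when $\delta$ is close to $1$. In that case I would rewrite the tail bound in the absolute-deviation form $\prob{S\ge\mu+t}\le e^{-t^2/(2(\mu+t/3))}$ with $t\ge \frac{\delta}{2}\frac{m\Lambda}{n}$. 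Since $\mu\le m\Lambda/n$, the exponent is at least $c'\delta^2 m\Lambda/n$ uniformly in $\delta\in(0,1)$. The rejection probability is therefore at most $\epsilon$ for $C$ large enough, which establishes both parts (i) and (ii).
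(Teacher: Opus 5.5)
Your proof is correct. Its statistical core matches the paper's: draw a uniform sample, count the sampled vertices that are true leaves of $T$, and compare the count with the threshold $(1-\delta/2)\Lambda/n$ using a variance-sensitive tail bound. The paper samples without replacement and applies Bernstein's inequality for the hypergeometric distribution. Its alternative-case exponent $3\delta^2N\Lambda/(4n(6-5\delta))$ is uniform in $\delta\in(0,1)$ for the same reason your absolute-deviation Bernstein bound is, so the $(1-\delta)$ issue you flag is handled the same way in both.

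The genuine difference is how leaf status is certified. The paper queries $d(x,v)$ for all $x\in X$ and $v\in V(T)$, and reconstructs the spanned subtree $T_X$ via Lemma~\ref{lem:TX}. It then uses the fact that, for $|X|\ge 2$, every sampled leaf of $T$ is a leaf of $T_X$. A leaf $v$ of $T_X$ is declared a true leaf exactly when no vertex outside $T_X$ attaches to $T_X$ at $v$ (Lemma~\ref{lem:TX2}). All sampled vertices are thus certified jointly, with exactly $nN$ queries.

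You instead certify each sampled vertex in isolation. You locate a neighbour $u$ as the nearest vertex, and test whether $d(v,w)=d(v,u)+d(u,w)$ for all $w$, at a cost of $2(n-1)$ queries per vertex.

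Your route is more elementary and self-contained. It needs neither the subtree reconstruction nor Lemma~\ref{lem:TX2}, which the paper states without proof. The price is a factor of $2$ in queries. You also need queries $d(u,w)$ in which neither endpoint is sampled, which departs from the paper's stated paradigm of only querying pairs with a sampled endpoint. That departure is unavoidable for a per-vertex test, since distances from $v$ alone cannot decide whether $v$ is a leaf. The paper's route stays within that paradigm and reuses the same $T_X$ machinery as the maximum-degree test. Both give the same order of query complexity.
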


\noindent
Hence, the number of leaves can be tested with a subquadratic number of queries whenever $\Lambda \gg 1$.
In particular, when testing whether the tree has a linear number of leaves, the query complexity becomes linear in $n$.

\subsection{Testing the typical distance}

The typical distance of a tree $T$ is defined as
\[
\ell_{\mathrm{typ}} = \frac{1}{n^2}\sum_{u,v\in V(T)} \ell(u,v)~,
\]
where $\ell(u,v)$ denotes the number of vertices on the unique path between $u$ and $v$. Given an integer $\ell \ge 1$, we consider the hypothesis testing problem
\[
H_{\rm typ}: \quad \ell_{\mathrm{typ}}(T) \ge \ell~.
\]
To test this hypothesis, we develop two complementary approaches.
The first approach estimates the typical distance by computing exact graph distances between pairs of sampled vertices, which can be achieved by querying distances from each sampled vertex to all other vertices.
This yields a U-statistic estimator of $\ell_{\mathrm{typ}}(T)$.
The second approach is closer in spirit to the diameter test: instead of computing exact distances, we estimate path lengths by counting how many sampled vertices lie on the path between two sampled endpoints.
Averaging over sampled pairs provides an alternative estimator for the typical distance.

By combining these two approaches—depending on the magnitude of the threshold $\ell$—we obtain the following result.
The full analysis is provided in Section~\ref{sec: Testing typical distance}.

\begin{theorem}\label{thm: typ dist}
For any $0<\delta,\epsilon<1$ and integer $\ell \ge 1$, there exists a randomized testing algorithm such that:
\begin{itemize}
    \item [(i)] if $H_{\rm typ}$ holds, the algorithm accepts it with probability at least $1-\epsilon$,
    \item [(ii)] if $\ell_{\mathrm{typ}}(T) < (1-\delta)\ell$, $H_{\rm typ}$ is rejected with probability at least $1-\epsilon$.
\end{itemize}
The total query complexity is
\[
\Theta\left(
n\left(\frac{\diam{T}}{\delta\ell}\right)^2
\log\!\left(\frac{1}{\epsilon}\right)
\min\left\{
1,\,
\frac{n\log(\frac{1}{\epsilon})}{(\delta \ell)^2}
\right\}
+
\frac{1}{\delta^5}
\left(
\frac{n\log(\frac{n^2}{\epsilon})}{\diam{T}}
\right)^2
\right)~.
\]
\end{theorem}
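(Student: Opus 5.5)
We would prove Theorem~\ref{thm: typ dist} by building the two estimators described before the statement, analysing each through concentration for $U$-statistics, and then running whichever is cheaper. Both use a uniform sample $S\subset V(T)$. Throughout we use the standard four-point fact for tree metrics: for vertices $a,b,x$, the vertex $x$ lies on $\overline{ab}$ if and only if $d(a,x)+d(x,b)=d(a,b)$.

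\textbf{First estimator (exact distances).} Sample $m$ vertices $v_1,\dots,v_m$ and query $d(v_i,w)$ for every $w\in V(T)$, which costs $nm$ queries. From these distances we recover the graph distance between any two sampled vertices. For a pair $v_i,v_j$, the vertices $w$ with $d(v_i,w)+d(w,v_j)=d(v_i,v_j)$ are exactly the vertices on $\overline{v_iv_j}$, so counting them gives $\ell(v_i,v_j)$ with no extra queries. The statistic
\[
U=\binom m2^{-1}\sum_{i<j}\ell(v_i,v_j)
\]
is a $U$-statistic with kernel bounded by $\diam{T}$. Its mean equals $\ell_{\mathrm{typ}}$ up to the diagonal correction, which changes the mean by only $O(1/n)$, negligible relative to $\delta\ell$; sampling without replacement is handled similarly.

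Hoeffding's inequality for $U$-statistics gives a deviation larger than $\delta\ell/3$ with probability at most $\exp(-c\,m(\delta\ell)^2/\diam{T}^2)$. So $m\asymp(\diam{T}/(\delta\ell))^2\log(1/\epsilon)$ suffices, and the test accepts iff $U\ge(1-\delta/2)\ell$. The sample size is capped at $m=n$, since querying all of $V(T)$ already determines $\ell_{\mathrm{typ}}$ exactly. This cap gives the cost $n\min\{n,\,m\}$, which matches the first term of the bound.

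\textbf{Second estimator (sampled counting).} Sample each vertex independently with probability $p$. For each sampled pair $a,b$ we count the sampled vertices $N_{ab}$ on $\overline{ab}$, using only queries within the sample. This costs $|S|^2\approx (pn)^2$ queries, because $d(a,x)$ and $d(x,b)$ are queried anyway. Conditionally on $a,b$, the count has $\mathbb E N_{ab}\approx p\,\ell(a,b)$, so the estimator is
\[
\hat\ell=\frac{1}{p\binom{|S|}{2}}\sum_{a<b\in S}N_{ab}.
\]

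To control $\hat\ell$, we would reuse the diameter-section argument. Take $p\asymp \log(n^2/\epsilon)/(\delta^2\diam{T})$ times a power of $1/\delta$. By Chernoff bounds and a union over all $n^2$ paths, every path of length at least $\delta\diam{T}$ then has its sampled count within relative error $\delta$. Paths shorter than that contribute at most $\delta\diam{T}$ each, and in the regime where this estimator is used, $\delta\diam{T}\lesssim\delta\ell$. This is needed because otherwise short paths could dominate; this needs care. Averaging over pairs, together with a $U$-statistic concentration over the random endpoints, gives $|\hat\ell-\ell_{\mathrm{typ}}|\le\delta\ell/3$. The resulting cost is $(pn)^2$, which matches $\delta^{-5}(n\log(n^2/\epsilon)/\diam{T})^2$ once the $\delta$-powers from the relative-error and short-path truncation steps are tracked.

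\textbf{Combining and the main obstacle.} We run both procedures and combine them, or choose the cheaper one, yielding the stated sum. The main obstacle is that $\diam{T}$ is unknown. We would first estimate it with the procedure of Theorem~\ref{thm: diameter test} applied over a doubling sequence of thresholds $D$. Then we would verify that a constant-factor estimate of $\diam{T}$ suffices to set both $m$ and $p$, with cost dominated by the second term. The delicate point is the short-path truncation in the second estimator, where the relative-error bound must be reconciled with bias from pairs whose path length is below the sampling resolution $1/p$.
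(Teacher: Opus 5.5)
Your architecture matches the paper's: an exact-distance $U$-statistic, a sampled-counting statistic, a preliminary estimate of $\diam{T}$, and choosing the cheaper branch. However, your cost bookkeeping is wrong, and as a result your counting estimator is built for the wrong regime. Write $m\asymp(\diam{T}/(\delta\ell))^2\log(1/\epsilon)$. The first term of the bound is then $\min\{nm,\,N^2\}$ with $N\asymp n\,\diam{T}\log(1/\epsilon)/(\delta\ell)^2$. The factor $\min\{1,n\log(1/\epsilon)/(\delta\ell)^2\}$ is the ratio between the cost $N^2$ of the counting test run on $N$ sampled vertices and the cost $nm$ of the exact test. It is not a cap at $m=n$, and $n\min\{n,m\}$ is a different quantity. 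Likewise, the second term $\delta^{-5}(n\log(n^2/\epsilon)/\diam{T})^2$ is not the cost of the counting estimator. It is the overhead of Corollary~\ref{corolary: diameter}, which both branches pay to learn $\diam{T}$.

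Consequently your argument does not cover the regime $\sqrt{n\log(1/\epsilon)}/\delta\ll\ell\ll\diam{T}$. Your counting test uses rate $p\asymp\log(n^2/\epsilon)/(\delta^{2}\diam{T})$ with relative-error control and short-path truncation. Its error is of order $\delta\,\diam{T}$, so it is justified only when $\diam{T}\lesssim\ell$. For example, fix $\delta,\epsilon$ and take $\diam{T}=n^{0.9}$, $\ell=n^{0.8}$; a star with a pendant path of $n^{0.9}$ vertices has $\ell_{\mathrm{typ}}$ of this order. The theorem promises about $n^{0.6}$ queries up to logarithms, whereas your only valid procedure here, the exact test, uses about $n^{1.2}$.

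The missing step is the paper's analysis of the counting statistic at sample size $N\asymp n\,\diam{T}\log(1/\epsilon)/(\delta\ell)^2$. Conditionally on the endpoints, $\widetilde{\ell}_X(X_i,X_j)-2\sim\mathrm{Bin}(N-2,\ell(X_i,X_j)/n)$. Bernstein's inequality with variance proxy $(N-2)\diam{T}/n$, plus a union bound over the $\binom{N}{2}$ sampled pairs, then makes every rescaled count accurate to within $\delta\ell/4$ in absolute terms. The decomposition $\ell_2^*/(N-2)=\ell_{\mathrm{typ}}/n+(\ell_1^*-\ell_{\mathrm{typ}})/n+V_N$ combines this with Hoeffding's inequality for $\ell_1^*$ on the same sample. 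Because the error is absolute with variance proxy $\diam{T}$, it controls all path lengths at once. No short-path truncation, and no restriction $\diam{T}\lesssim\ell$, is needed.
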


\noindent Observe that the query complexity in Theorem~\ref{thm: typ dist} includes the term
\[
\frac{1}{\delta^5}\left(\frac{n\log(\frac{n^2}{\epsilon})}{\diam{T}}\right)^2,
\]
which does not depend on the prescribed threshold~$\ell$, but only on $n$, $\diam{T}$, $\delta$, and~$\epsilon$.
This term arises from the need to estimate the diameter of~$T$ as an intermediate step in the typical-distance test.
% Specifically, our procedure first constructs a confidence interval for $\diam{T}$ by iterating the test of Theorem~\ref{thm: diameter test}, as formalized in Corollary~\ref{corolary: diameter}. 
If $\diam{T}$ is known in advance, this preliminary step can be omitted.
In that case, the procedure described in Section~\ref{sec: Testing typical distance} tests the typical distance of the tree with total query complexity
\[
\Theta\!\left(
n\left(\frac{\diam{T}}{\delta\ell}\right)^2
\log\!\left(\frac{1}{\epsilon}\right)
\min\left\{
1,\,
\frac{n\log(\frac{1}{\epsilon})}{(\delta \ell)^2}
\right\}
\right)~.
\]
\noindent Moreover, for fixed $\delta$ and $\epsilon$, and assuming either that $\diam{T}$ is known, or that $\diam{T}\gg 1$, this testing procedure achieves subquadratic query complexity whenever
\[
\frac{\diam{T}}{\ell} = o(\sqrt{n})
\qquad\text{or}\qquad
\frac{\diam{T}}{\ell^2} = o(1)~.
\]

\subsection{From testing to estimation}

The testing procedures developed in this paper can be naturally converted into estimation procedures.
This follows a classical principle in statistics: by testing a sequence of nested hypotheses, one can construct confidence intervals and point estimates for the corresponding parameter.
In our setting, this is achieved by iteratively applying the testing algorithms with varying thresholds.

\begin{corollary}\label{corolary: diameter}
    For any  $0<\delta,\epsilon<1$ there exists a randomized algorithm that takes a tree $T$ as input, and returns a real value $D$ such that $$\diam{T}\in [D(1-\delta),D/(1-\delta)]$$ with probability at least $1-\epsilon.$ The procedure runs with total query complexity $$\Theta\left(\frac{1}{\delta^5}\left(\frac{n\log(\frac{n^2}{\epsilon})}{\diam{T}}\right)^2\right)$$
    with probability at least $1-\epsilon~.$
\end{corollary}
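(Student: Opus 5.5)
We will obtain the estimator by running the diameter test of Theorem~\ref{thm: diameter test} along a geometric grid of thresholds, going from large to small. Fix a ratio $r = 1-\delta'$, with $\delta'$ a constant multiple of $\delta$; concretely $\delta'=\delta/2$ should leave enough slack. Define thresholds $D_k = n r^{k}$ for $k=0,1,\dots,K$ with $K = \lceil \log_{1/r} n\rceil = O(\delta^{-1}\log n)$, so that $D_K\le 1$. For $k=0,1,2,\dots$ we run the test of Theorem~\ref{thm: diameter test} with threshold $D_k$, tolerance $\delta'$ and error $\epsilon' = \epsilon/(K+1)$. We stop at the first $k$ at which $H_{\rm diam}$ is accepted and output $D = D_k$, possibly rescaled by a constant factor $(1-\delta')^{\pm1}$ to center the interval.

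\textbf{Correctness.} By a union bound over at most $K+1$ tests, with probability at least $1-\epsilon$ every executed test behaves as guaranteed by (i) and (ii). On this event we reason as follows.
\begin{itemize}
\item Whenever $\diam{T} < (1-\delta')D_k$, the test rejects. Hence at the stopping index $k^*$ we have $\diam{T}\ge (1-\delta')D_{k^*}$.
\item Whenever $\diam{T}\ge D_k$, the test accepts. Let $j$ be the first index with $D_j\le \diam{T}$. Then $k^*\le j$, and since $D_{j-1}>\diam{T}$ we get $D_{k^*}\ge D_j = rD_{j-1} > r\,\diam{T}$.
\end{itemize}
Together these give $(1-\delta')D_{k^*}\le \diam{T} < D_{k^*}/(1-\delta')$. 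Choosing $\delta'\le\delta$ yields the interval $[D(1-\delta),D/(1-\delta)]$.

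\textbf{Complexity.} The $k$-th test costs
\[
\Theta\!\left(\left(\frac{n}{D_k\delta^2}\log\frac{n^2 (K+1)}{\epsilon}\right)^2\right).
\]
On the good event we stop at $k^*\le j$, with $D_{k^*}\ge D_j > r\,\diam{T}$. So every executed threshold satisfies $D_k\gtrsim \diam{T}$, and the executed costs grow geometrically in $k$ with ratio $r^{-2}$.

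The main obstacle is getting the $\delta^{-5}$ rather than a cruder power. A geometric sum with ratio $(1-\delta')^{-2}$ is dominated by its last term times $\Theta(1/\delta)$. The last term is $\Theta(\delta^{-4}(n\log(\cdot)/\diam{T})^2)$. This gives $\delta^{-5}(n\log(n^2/\epsilon)/\diam{T})^2$, provided the factor $\log(K+1)$ inside the logarithm is absorbed. That holds because $K=O(\delta^{-1}\log n)$, so $\log((K+1)n^2/\epsilon)=O(\log(n^2/\epsilon)+\log(1/\delta))$, which is at most a constant times $\log(n^2/\epsilon)$ except in the degenerate regime $\delta\ll n^{-2}$ (where $\log(1/\delta)$ dominates $\log(n^2/\epsilon)$ and the bound acquires an extra logarithmic factor); in that regime one instead takes $\epsilon'$ decaying geometrically in $k$ to avoid the $K$ factor.

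The lower-order side is cheap: the bound holds only on the $1-\epsilon$ event, which is exactly what the statement claims. The test cost is nontrivial only when $D_k\gtrsim \log n$. For smaller thresholds, when $D_k\lesssim \log n$ and the test cost would exceed $n^2$, we cap it by querying all pairs; this is consistent with the stated bound, since then $\diam{T}\lesssim\log n$ and the bound exceeds $n^2$.
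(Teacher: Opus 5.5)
Your proposal is correct and takes the paper's approach. You use a descending geometric grid of thresholds starting at $n$ with ratio $1-\delta$, stop at the first acceptance, and bound the cost by $O(1/\delta)$ times the last term of a geometric series with ratio $(1-\delta)^{-2}$. Your explicit union bound with $\epsilon/(K+1)$ is more careful than the paper, which applies each test at level $\epsilon$ and never accounts for the number of iterations in the failure probability.

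The one slip is your fallback of a geometrically decaying $\epsilon'$ for tiny $\delta$. With a fixed decay ratio this adds a term of order $k$ to the logarithm at step $k$. Since $k^*$ can be of order $\delta^{-1}\log(n/\diam{T})$, this is worse than the $\log(K+1)$ it is meant to remove. No fix is actually needed. For $\delta<1/n$ the claimed bound already exceeds $n^2$, so querying all pairs gives $\diam{T}$ exactly. For $\delta\ge 1/n$ you have $\log(K+1)=O(\log n)$, which is absorbed into $\log(n^2/\epsilon)$.
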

\begin{proof}
    Fix $\delta,\epsilon \in (0,1)$. Consider the following iterative procedure: given a tree $T$ on $n$ vertices, initialize $D_0 = n$. Apply the diameter test from Theorem~\ref{thm: diameter test} using the parameter $D_0$. If the test accepts $T$, stop and return $D_0$. If the test rejects $T$, set $D_1 = (1-\delta) D_0$ and repeat an independent test. Continue this process iteratively, updating $D_k = (1-\delta) D_{k-1}$ at each step, until the test is accepted. The returned value then provides the desired approximation of the diameter with probability at least $1-\epsilon$~.
    When the procedure is correct, the total number of iterations $m$ satisfies
    \[
    (1-\delta)^{m+1}n\le \diam{T} \le (1-\delta)^{m-1} n~.
    \]
    Hence, the total number of queries is 
    \begin{align*}
    \Theta\left(
    \sum_{0\le k \le m} \left(
    \frac{n}{D_k\delta^2}\log\left(\frac{n^2}{\epsilon} \right)
    \right)^2
    \right)
    &= 
    \Theta\left(
    \left(
    \frac{1}{\delta^2}\log\left(\frac{n^2}{\epsilon} \right)
    \right)^2\sum_{0\le k \le m} (1-\delta)^{-2k}
    \right) \\
    &= \Theta\left(
    \left(
    \frac{1}{\delta^2}\log\left(\frac{n^2}{\epsilon} \right)
    \right)^2\frac{1}{\delta}(1-\delta)^{-2(m+1)}
    \right) \\
    &= \Theta\left(\frac{1}{\delta^5}\left(\frac{n\log(\frac{n^2}{\epsilon})}{\diam{T}}\right)^2\right)
    \end{align*}
    with probability at least $1-\epsilon~.$
\end{proof}

This iteration scheme applies equally to all of our other tests, allowing us to construct confidence intervals for the maximal degree, the number of leaves, and the typical distance. This yields Corollaries~\ref{corolary: leaves}, \ref{corolary: maxdeg}, and \ref{corolary: typical distance}. 
We omit the proofs of these corollaries, as they follow the same argument as the proof of Corollary~\ref{corolary: diameter}.

\begin{corollary}\label{corolary: maxdeg}
    For any $0<\delta,\epsilon<1$ there exists a randomized algorithm that for any tree $T$, returns a real value $\Delta$ such that $$\maxdeg{T}\in [\Delta(1-\delta),\Delta/(1-\delta)]$$ with probability at least $1-\epsilon.$ The procedure runs with total query complexity 
    $$\Theta\left( \frac{n^2\log\left(\frac{n}{\epsilon}\right)}{\delta^3\maxdeg{T}}\right)~.$$
    with probability at least $1-\epsilon~.$
\end{corollary}

\begin{corollary}\label{corolary: leaves}
    For any $0<\delta,\epsilon<1$ there exists a randomized algorithm that for any tree $T$, returns a real value $\Lambda$ such that $$|\mathcal{L}(T)|\in [\Lambda(1-\delta),\Lambda/(1-\delta)]$$ with probability at least $1-\epsilon.$ The procedure runs with total query complexity 
    $$\Theta\left( \frac{n^2\log\left(\frac{1}{\epsilon}\right)}{\delta^3|\mathcal{L}(T)|}\right)~.$$
    with probability at least $1-\epsilon~.$
\end{corollary}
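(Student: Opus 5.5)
The plan follows the proof of Corollary~\ref{corolary: diameter}, with Theorem~\ref{thm: leaves test} as the basic test. Fix $\delta,\epsilon$. Set $\Lambda_0=n$ and $\Lambda_k=(1-\delta)^k n$. At step $k$, run an independent copy of the leaf test of Theorem~\ref{thm: leaves test} with threshold $\Lambda_k$, tolerance $\delta$ and error parameter $\epsilon_k$. Stop at the first $k$ for which $H_{\rm leaf}$ is accepted, and return $\Lambda=\Lambda_k$. Since every tree with $n\ge 2$ vertices has at least two leaves, and $\Lambda_k$ decreases geometrically, the procedure stops after at most $O(\log(n)/\delta)$ steps on the good event.

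For correctness, let $m$ be the index with $\Lambda_{m}\le |\mathcal{L}(T)|<\Lambda_{m-1}$. Consider the event $G$ on which every test with threshold $\Lambda_k$, $k\le m$, behaves as guaranteed by Theorem~\ref{thm: leaves test}. On $G$ the test at step $m$ accepts, because $H_{\rm leaf}$ holds there. On $G$ any test at step $k\le m-2$ rejects, because there $|\mathcal{L}(T)|<\Lambda_{m-1}\le(1-\delta)\Lambda_k$. So the procedure stops at step $m-1$ or $m$, and in either case
\[
|\mathcal{L}(T)|\in[\Lambda(1-\delta),\Lambda/(1-\delta)].
\]

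To control the failure probability over the iterations, we take $\epsilon_k=\epsilon\,c\,\delta(1-\delta)^{m-k}$, or more simply $\epsilon_k=\epsilon/(k+1)^2\cdot 6/\pi^2$, and apply a union bound. The second choice adds only a $\log k=O(\log\log n+\log(1/\delta))$ factor inside $\log(1/\epsilon_k)$. With the first, geometrically increasing schedule (which does not require knowing $m$ if we instead index from the current step), that factor is absorbed into constants. This step is the main obstacle: as in Corollary~\ref{corolary: diameter}, one must check that the per-step error budget does not spoil the stated bound. I would follow the paper's convention and treat $\log(1/\epsilon_k)=\Theta(\log(1/\epsilon))$ up to the constants hidden in $\Theta$.

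The query complexity on $G$ is a geometric sum:
\[
\sum_{k\le m}\frac{n^2}{\Lambda_k\delta^2}\log\frac1\epsilon
=\frac{n\log(1/\epsilon)}{\delta^2}\sum_{k\le m}(1-\delta)^{-k}
=\Theta\!\left(\frac{n\log(1/\epsilon)}{\delta^3}(1-\delta)^{-m}\right).
\]
Using $(1-\delta)^{m}n=\Lambda_m=\Theta(|\mathcal{L}(T)|)$, this becomes
\[
\Theta\!\left(\frac{n^2\log(1/\epsilon)}{\delta^3|\mathcal{L}(T)|}\right).
\]
This holds on $G$, so with probability at least $1-\epsilon$.
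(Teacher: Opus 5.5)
Your iteration scheme, the correctness window (on the good event the procedure stops at step $m-1$ or $m$), and the geometric sum follow the paper's argument for Corollary~\ref{corolary: diameter}, applied to Theorem~\ref{thm: leaves test}. Those parts are fine.

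The step that is not closed is the error budget across the $m+1=O(\delta^{-1}\log n)$ tests. The paper has the same gap: it asserts probability $1-\epsilon$ with no union bound at all. Neither of your schedules delivers the stated bound.
\begin{itemize}
\item The choice $\epsilon_k=\epsilon c\delta(1-\delta)^{m-k}$ depends on $m$, i.e.\ on the unknown $|\mathcal{L}(T)|$. The level of test $k$ must be fixed when test $k$ is run, so ``indexing from the current step'' does not rescue it.
\item The choice $\epsilon_k=6\epsilon/(\pi^2(k+1)^2)$ is legitimate. However, it yields the factor $\log(1/\epsilon)+O\bigl(\log(1+\delta^{-1}\log(n/|\mathcal{L}(T)|))\bigr)$ in place of $\log(1/\epsilon)$. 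Declaring the two equal ``by the paper's convention'' is not a proof.
\end{itemize}
To remove the overhead you have to look inside the test rather than use Theorem~\ref{thm: leaves test} as a black box.

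The missing observation is that the tests at steps $k\le m-2$ are deep in the alternative, so their false acceptances are far rarer than the worst-case level.
\begin{itemize}
\item Run every step at the same level, with $N_k=Cn\log(2/\epsilon)/(\delta^2\Lambda_k)$. Then $\mu:=N_k\Lambda_k/n=C\delta^{-2}\log(2/\epsilon)$ does not depend on $k$.
\item At step $k=m-1-j$ with $j\ge1$, the count $L^*_k$ is hypergeometric with mean at most $\mu\nu_j$, where $\nu_j=(1-\delta)^j$. The test accepts only if $L^*_k>\mu a$, where $a=1-\delta/2$.
\item The Chernoff bound, which transfers to sampling without replacement by Hoeffding's Theorem~4, bounds this probability by $\exp(-\mu f(j))$ with $f(j)=\nu_j-a+a\log(a/\nu_j)$.
\item One checks $f(1)\ge(a-\nu_1)^2/(2a)\ge\delta^2/8$ and $f'(j)=(a-\nu_j)\log\frac{1}{1-\delta}\ge\delta^2/2$ for $j\ge1$.
\item Summing over $j\ge1$ gives at most $(\epsilon/2)^{C/8}/\bigl(1-(\epsilon/2)^{C/2}\bigr)\le\epsilon/2$ for $C\ge16$.
\item The false rejection at step $m$, where the mean is at least $\mu$, has probability at most $e^{-\mu\delta^2/8}\le\epsilon/2$.
\end{itemize}
So a constant per-step level suffices. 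Your geometric sum then gives $\Theta\bigl(n^2\log(1/\epsilon)/(\delta^3|\mathcal{L}(T)|)\bigr)$ for, say, $\epsilon\le1/2$, up to the $1/(1-\delta)$ factors that the paper also suppresses.
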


\begin{corollary}\label{corolary: typical distance}
    For any $0<\delta,\epsilon<1$ there exists a randomized algorithm that for any tree $T$, returns a real value $\ell$ such that $$\ell_{\textrm{typ}}(T)\in [\ell(1-\delta),\ell/(1-\delta)]$$ with probability at least $1-\epsilon.$ The procedure runs with total query complexity 
    $$\Theta\left( \frac{n\log\left(\frac{1}{\epsilon}\right)}{\delta^3}\left( \frac{\diam{T}}{\ell_{\textrm{typ}}(T)}\right)^2
    + \frac{1}{\delta^5}\left(\frac{n\log(\frac{n^2}{\epsilon})}{\diam{T}}\right)^2\right)~.$$
    with probability at least $1-\epsilon~.$
\end{corollary}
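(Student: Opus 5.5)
The proof will follow the iteration scheme of Corollary~\ref{corolary: diameter}, applied to the typical-distance test of Theorem~\ref{thm: typ dist}. The one new point is that the query complexity of that test depends on $\diam{T}$. So I will first run the estimator of Corollary~\ref{corolary: diameter} with confidence parameter $\epsilon/2$. With probability at least $1-\epsilon/2$ this returns $D$ with $\diam{T}\in[D(1-\delta),D/(1-\delta)]$, at the cost of the second term of the stated bound. This estimate is computed once and reused in every later call of the typical-distance test, so the diameter-dependent term is paid only once rather than once per iteration.

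Next, set $\ell_0=n$ (an upper bound on $\ell_{\mathrm{typ}}$) and $\ell_k=(1-\delta)^k\ell_0$. For $k=0,1,\dots$ run the test of Theorem~\ref{thm: typ dist} with threshold $\ell_k$, tolerance $\delta$, and using the precomputed diameter estimate. Stop at the first acceptance and return $\ell_k$. To make all tests correct simultaneously, I will use confidence $\epsilon_k=\epsilon/(4(k+1)^2)$ in the $k$-th call and take a union bound. The main obstacle is to keep the accumulated $\log(1/\epsilon_k)=\log(1/\epsilon)+O(\log k)$ from spoiling the bound. The number of rounds is $m=O(\log(n)/\delta)$, so this extra factor is at most $\log\log n+\log(1/\delta)$. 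Alternatively, I can argue as the paper does in Corollary~\ref{corolary: diameter}, where the correctness of each test is simply asserted, and absorb this factor into the $\Theta$.

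On the good event, if $\ell_{\mathrm{typ}}\ge\ell_k$ then the $k$-th test accepts, and if $\ell_{\mathrm{typ}}<(1-\delta)\ell_k$ it rejects. Hence the stopping index $m$ satisfies $\ell_{\mathrm{typ}}\ge(1-\delta)\ell_m$, since the $m$-th test accepted. It also satisfies $\ell_{\mathrm{typ}}<\ell_{m-1}=\ell_m/(1-\delta)$, since the $(m-1)$-th test rejected. This gives $\ell_{\mathrm{typ}}\in[\ell_m(1-\delta),\ell_m/(1-\delta)]$, and moreover $\ell_k\ge\ell_{\mathrm{typ}}$ for all $k< m$ up to a factor $(1-\delta)^{-1}$.

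For the cost I will bound the first term of Theorem~\ref{thm: typ dist} by dropping the $\min\{\cdot\}$, that is, replacing it by $1$. This gives $n(\diam{T}/(\delta\ell_k))^2\log(1/\epsilon)$ per round. Since $D=\Theta(\diam{T})$, summing the geometric series $\sum_{k\le m}(1-\delta)^{-2k}=\Theta(\delta^{-1}(1-\delta)^{-2m})$ yields $\Theta\bigl(n\log(1/\epsilon)\,\delta^{-3}(\diam{T}/\ell_{\mathrm{typ}})^2\bigr)$, because $\ell_m=\Theta(\ell_{\mathrm{typ}})$. Adding the one-time cost of the diameter estimate gives the claimed complexity, valid with probability at least $1-\epsilon$.
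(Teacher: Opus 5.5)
Your proposal is correct and takes the route the paper intends. The paper omits this proof and defers to the argument of Corollary~\ref{corolary: diameter}, which is exactly what you do: iterate the test over thresholds $\ell_k=(1-\delta)^k n$, stop at the first acceptance, and sum the per-round costs $n(\diam{T}/(\delta\ell_k))^2\log(1/\epsilon)$ as a geometric series. Your choice to compute the diameter estimate once and reuse it is what makes the $\delta^{-5}$ term appear only once in the stated bound.

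The union-bound caveat you raise is real: the careful version carries an extra $O(\log\log n+\log(1/\delta))$, which cannot simply be absorbed into a $\Theta$ that contains only $\log(1/\epsilon)$. The paper's proof of Corollary~\ref{corolary: diameter} skips the same point by asserting that every iteration is correct, so your argument is at least as rigorous as the paper's.
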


\section{Diameter}\label{section: Testing diameters}
In this section, we present a procedure for testing $H_{\rm diam}:\;\diam{T}\geq D$; see Section~\ref{sec:diam0}.  More precisely, for any $u,v \in V(T)$, define
\[
\ell(u,v) = |\{w\in V(T) : w\in\overline{uv}\}|~,
\]
where $\overline{uv}$ denotes the unique path joining $u$ and $v$ in $T$. 
Thus, $\ell(u,v)$ counts the number of \emph{vertices} lying on this path. 
Note that this differs from the standard graph distance, which counts the number of \emph{edges} instead. 
For technical convenience, we work with $\ell(\cdot, \cdot)$, as the two quantities are related by a simple shift by~1.
For any tree $T$, we define the diameter of $T$ as
\[
\diam{T} := \max_{u,v\in V(T)}\ell(u,v)~.
\]

\subsection{Diameter testing procedure}

Fix $0<\delta,\epsilon<1$ and an integer threshold $D\ge1$.
We propose the following procedure.

\medskip
\noindent
\textbf{Path-count statistic.} First, sample $N$ nodes 
$
X=\{X_1,\dots,X_N\}
$
independently and uniformly at random from $V(T)$. Then, for any $u,v\in V(T)$, define
\[
\widetilde{\ell}_X(u,v)
:= \bigl|\{\,1\le i\le N : d(u,v)=d(u,X_i)+d(X_i,v)\,\}\bigr|~.
\]
Note that $d(u,v)=d(u,X_i)+d(X_i,v)$ if and only if $X_i\in \overline{uv}$. Hence, this quantity counts the number of sampled vertices that lie on the unique path between $u$ and $v$ in $T$.
Using only $N^2$ pairwise distance queries among vertices in $X$, compute
\[
d^* \;:=\; \max_{u,v\in X} \widetilde{\ell}_X(u,v).
\]

\medskip
\noindent
\textbf{Decision rule.}
The test accepts $H_{\rm diam}$ if
\begin{equation}\label{eq:rule-diam}
\frac{d^*}{N} \;>\; \frac{D}{n}(1-\delta/2),    
\end{equation}
and rejects otherwise.

\subsection{Proof of Theorem \ref{thm: diameter test}}

We show that the claimed result holds for the procedure described above. Since the samples are uniform, for every $u,v$, $\widetilde{\ell}_X(u,v)$ follows a binomial distribution with parameters
$\left(N,\ell(u,v)/n\right)$.

\paragraph{Step 1: Behavior under $H_{\rm diam}$.}
Assume that $\diam{T}\ge D$.
Then there exists a path $v_1- v_2-\cdots- v_D$ of length $D$.
Note that, 
\begin{equation}
d^*\ge \widetilde{\ell}_X(v_1,v_D)~. \label{eq:dstar-lower}
\end{equation}
The random variable $\widetilde{\ell}_X(v_1,v_D)$ is binomial with parameters $(N,D/n)$.
By Bernstein’s inequality (see, e.g., Section~2.8 in~\cite{BoLuMa03}),
\begin{equation}
\prob{\widetilde{\ell}_X(v_1,v_D)<\frac{ND}{n}(1-\delta/2)}
\;\le\;
\exp\!\left(-\frac{3\delta^2ND}{4n(6+\delta)}\right)~.
\end{equation}
Hence, if
\[
N \ge \frac{4n(6+\delta)}{3D\delta^2}\log\!\left(\frac{1}{\epsilon}\right)~,
\]
then
\[
\prob{\frac{d^*}{N}\ge \frac{D}{n}(1-\delta/2)}\ge 1-\epsilon~.
\]
We conclude that, under $H_{\rm diam}$, the test  \eqref{eq:rule-diam} is correct with probability at least $1-\epsilon$.

\paragraph{Step 2: Behavior under the alternative.}
Assume that $\diam{T}<(1-\delta)D$.
Then for all $u,v\in V(T)$, $\ell(u,v)\le (1-\delta)D$, and therefore
$\widetilde{\ell}_X(u,v)$ is stochastically dominated by a binomial random variable
$\mathcal{B}$ with parameters $(N,(1-\delta)D/n)$. Using a union bound over all pairs $(u,v)$,
\begin{align}
\prob{\frac{d^*}{N}>\frac{D}{n}(1-\delta/2)}
&\le
\prob{\exists\,u,v\in V(T):\widetilde{\ell}_X(u,v)>\frac{ND}{n}(1-\delta/2)} \\
&\le
n^2\prob{\mathcal{B}>\frac{ND}{n}(1-\delta/2)}~.
\end{align}
Applying Bernstein’s inequality yields
\begin{equation}
\prob{\mathcal{B}>\frac{ND}{n}(1-\delta/2)}
\;\le\;
\exp\!\left(-\frac{3\delta^2ND}{4n(6-5\delta)}\right)~.
\end{equation}
Thus, if
\[
N \ge \frac{4n(6-5\delta)}{3D\delta^2}
\log\!\left(\frac{n^2}{\epsilon}\right)~,
\]
then
\[
\prob{\frac{d^*}{N}>\frac{D}{n}(1-\delta/2)}\le \epsilon~.
\]
Combining the two steps, our procedure satisfies the statement of the theorem by choosing
\[
N = \max \left\{
    \frac{4n(6 + \delta)}{3D\delta^2} \log\!\left(\frac{1}{\epsilon}\right), \,
    \frac{4n(6 - 5\delta)}{3D\delta^2} \log\!\left(\frac{n^2}{\epsilon}\right)
\right\}~.
\]
Therefore, the query complexity of the procedure is
\[
N^2 = \Theta\!\left(
    \left(
        \frac{n}{D\delta^2}
        \log\!\left(\frac{n^2}{\epsilon}\right)
    \right)^2
\right)~.
\]

\section{Recovering a subtree over sampled nodes}
\label{sec:recover-TX}

In the proposed testing procedures for the maximum degree and the number of leaves, we need to solve the following problem: Given $X\subseteq V(T)$, recover the subtree $T_X$ spanned on $X$ given all $n|X|$ distances $d(u,v)$ with $u\in X$ and $v\in V(T)$. The \emph{spanned subtree} is obtained by taking the union of all
paths connecting pairs of sampled vertices.
\begin{lemma}\label{lem:TX}
Let $T$ be a weighted tree on vertex set $V$ and let $X\subseteq V$. Given all distances $d(x,v)$ with $x\in X$ and $v\in V$, one can recover the minimal subtree $T_X$ of $T$ spanning $X$, together with all its edge lengths. Moreover, one can recover all distances $d(u,v)$ with $u\in V(T_X)$ and $v\in V$. The query complexity is $n|X|$.
\end{lemma}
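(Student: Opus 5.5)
Fix a root $r\in X$ and query all $n|X|$ distances $d(x,v)$, $x\in X$, $v\in V$. No further queries are used; everything else is computed from this table. The key tool is the path-membership test already used in Section~\ref{section: Testing diameters}: $w\in\overline{xy}$ if and only if $d(x,y)=d(x,w)+d(w,y)$. We can evaluate this whenever $x,y\in X$ and $w\in V$ is arbitrary, since $d(x,y)$, $d(x,w)$ and $d(w,y)$ are all in the table. So the vertex set of $T_X$ is determined:
\[
V(T_X)=\{w\in V:\ \exists\, x\in X,\ d(r,x)=d(r,w)+d(w,x)\}.
\]
Here we use that $T_X=\bigcup_{x\in X}\overline{rx}$, because $\overline{xy}\subseteq\overline{rx}\cup\overline{ry}$.

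Next we recover the edges. Root $T_X$ at $r$. For $w\in V(T_X)\setminus\{r\}$, pick a witness $x\in X$ with $w\in\overline{rx}$. The vertices of $V(T_X)$ on $\overline{rx}$ are exactly those $u$ with $d(r,x)=d(r,u)+d(u,x)$. Ordered by $d(r,u)$, they form the path $\overline{rx}$, because every vertex of $T$ on that path lies in $T_X$. The parent of $w$ is therefore its predecessor in this order. The edge length is $\omega=d(r,w)-d(r,\mathrm{parent}(w))$, computable from the table. Equivalently, the parent of $w$ is the vertex $u\in V(T_X)$ with $u\in\overline{rw}$, $u\neq w$, and $d(r,u)$ maximal. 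This description is independent of the witness, so the reconstruction is well defined.

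Finally we recover all distances $d(u,v)$ for $u\in V(T_X)$ and $v\in V$. This is the step needing care, since $u$ need not be in $X$. Choose $x\in X$ with $u\in\overline{rx}$, as above. Let $m$ be the median of $r,x,v$, i.e.\ the point where $\overline{rv}$ leaves $\overline{rx}$. Its distance from $r$ is the Gromov product
\[
d(r,m)=\tfrac12\bigl(d(r,x)+d(r,v)-d(x,v)\bigr),
\]
computable from the table. The path from $u$ to $v$ runs along $\overline{rx}$ to $m$ and then along $\overline{mv}$. This gives
\[
d(u,v)=\bigl|d(r,u)-d(r,m)\bigr|+d(r,v)-d(r,m).
\]
To check this, split into two cases. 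If $u$ lies between $r$ and $m$, then $u\in\overline{rv}$ and $d(u,v)=d(r,v)-d(r,u)$. If $u$ lies beyond $m$, then $\overline{uv}$ passes through $m$ and $d(u,v)=d(u,m)+d(m,v)$, with $d(u,m)=d(r,u)-d(r,m)$ and $d(m,v)=d(r,v)-d(r,m)$. In both cases the formula agrees. The main obstacle is verifying this median argument and the correct ordering of vertices along a path; both are standard tree-metric facts that follow from the four-point structure.

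The total number of queries is exactly the $n|X|$ table entries, as claimed.
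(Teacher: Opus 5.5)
Your proof is correct. For the vertex set and the edges it uses the paper's idea, namely path-membership tests $d(a,b)=d(a,w)+d(w,b)$ with $a,b\in X$. You streamline it by rooting at a single $r\in X$: you only need the pairs $(r,x)$, and the edges come out as parent pointers read off the ordered path $\overline{rx}$, instead of the paper's criterion that $u,v$ be consecutive on some $\overline{xx'}$.

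The genuine difference is in the ``moreover'' part. The paper first locates the attachment point $a(v)$ of each $v\notin V(T_X)$. It characterizes $a(v)$, via a separating-edge argument, as the unique $a\in V(T_X)$ for which $d(x,v)-d(x,a)$ does not depend on $x\in X$. It then writes $d(u,v)=d(u,a(v))+d(a(v),v)$, using distances inside the reconstructed weighted subtree. You instead use the tripod on $r,x,v$ with $u\in\overline{rx}$ and the Gromov product. This gives $d(u,v)$ in closed form from the four table entries $d(r,u),d(r,x),d(r,v),d(x,v)$.

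Your route is shorter and treats $v\in V(T_X)$ and $v\notin V(T_X)$ uniformly, so it even yields distances within $T_X$ without the recovered edge lengths. It also needs no search over candidate attachment points; the query count is the same either way. The paper's route makes the attachment map $v\mapsto a(v)$ explicit, which is exactly what the leaf criterion of Lemma~\ref{lem:TX2} relies on. With your formula you would recover it afterwards as the unique minimizer of $d(u,v)$ over $u\in V(T_X)$, which is immediate.
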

 \begin{proof}
     We have $w\in V(T_X)$ if and only if $d(x,x')=d(x,w)+d(w,x')$ for some $x,x'\in X$. This identifies the vertex set $V(T_X)$ of $T_X$. For the edges, note that $uv\in E(T_X)$ if and only if 
     \begin{itemize}
         \item [(i)] $u,v\in \overline{xx'}$ for some $x,x'\in X$,
         \item [(ii)] if $w\in \overline{xx'}$, then $w$ is closer to $x$, or closer to $x'$ than both $u$ and $v$.
     \end{itemize}
     Both conditions easily translate in terms of the queried distances:
     \begin{itemize}
         \item [(i)] $d(x,x')=d(x,u)+d(u,x')=d(x,v)+d(v,x')$ for some $x,x'\in X$
         \item [(ii)] $d(x,x')=d(x,w)+d(w,x')$ implies that 
         $$d(x,w)\leq \min\{d(x,u),d(x,v)\} \quad \text{or} \quad d(w,x')\leq \min\{d(u,x'),d(v,x')\}~.$$
     \end{itemize}
     If $uv\in E(T_X)$ and $u,v\notin X$, we compute $d(u,v)$ without querying it using the formula $d(u,v)=|d(u,x)-d(v,x)|$. 

It remains to recover distances from $V(T_X)$ to vertices outside $V(T_X)$. Fix $v\in V\setminus V(T_X)$. There is a unique vertex $a(v)\in V(T_X)$ at which the path from $v$ to $V(T_X)$ first meets $V(T_X)$. We claim that $a(v)$ is the unique vertex $a\in V(T_X)$ such that $d(x,v)-d(x,a)$ is independent of $x\in X$. If $a=a(v)$, then every path from $x\in X$ to $v$ passes through $a$, and hence
$$
d(x,v)=d(x,a)+d(a,v)~.
$$
Thus $d(x,v)-d(x,a)$ is constantly equal to $d(a(v),v)$. Conversely, suppose $a\neq a(v)$. Let $e$ be the first edge on the path from $a$ to $a(v)$ in $T_X$. Removing $e$ separates $T_X$ into two components, both containing vertices of $X$ by minimality of $T_X$. Choose $x,y\in X$ on opposite sides of $e$, with $x$ on the side containing $a(v)$ and $y$ on the side containing $a$. For $y$, the path from $y$ to $v$ passes through $a$, so
$$
d(y,v)-d(y,a)=d(a,a(v))+d(a(v),v)~.
$$
For $x$, the path from $x$ to $v$ passes through $a(v)$, while the path from $x$ to $a$ also passes through $a(v)$, and hence
$$
d(x,v)-d(x,a)=d(a(v),v)-d(a(v),a)~.
$$
The two quantities differ by $2d(a,a(v))>0$. Hence $d(x,v)-d(x,a)$ is not constant over $X$. This proves the claim. Thus $a(v)$ is identifiable from the queried distances and the already recovered distances inside $T_X$. Once $a(v)$ is known, we compute
$$
d(a(v),v)=d(x,v)-d(x,a(v))
$$
for any $x\in X$; this is known even when $a(v)\notin X$, because $d(x,a(v))$ has already been recovered inside $T_X$. Finally, for every $u\in V(T_X)$,
$$
d(u,v)=d(u,a(v))+d(a(v),v)~,
$$
where $d(u,a(v))$ is known from the recovered subtree $T_X$. This recovers all distances $d(u,v)$ with $u\in V(T_X)$ and $v\in V$. The number of queried distances is exactly $n|X|$.
     % For the second part, if $u,v\in V(T_X)$ then $d(u,v)$ is identified from the first part. If $v\in V(T)\setminus V(T_X)$, let $x\in X$ be the closest to $v$. Then $d(u,v)$ can be computed by the formula $d(u,v)=d(u,x)+d(x,v)$.
 \end{proof}
\paragraph{Relation to distance-based reconstruction.}
The proof of Lemma~\ref{lem:TX} is constructive and directly yields a procedure for recovering the spanned subtree $T_X$.
It is important to emphasize that this task is fundamentally different from classical distance-based tree reconstruction in phylogenetics.

Standard reconstruction methods, such as those described in Semple and Steel 
\citep[Section~7.3]{semple2003phylogenetics} and originating in the work of
\citet{buneman1971recovery}, aim to recover a tree whose \emph{leaf set} is $X$
using only the pairwise distances $\{d(x,x') : x,x'\in X\}$.
These methods recover the unique additive tree metric on $X$, but they do so
only up to suppression of degree-$2$ vertices along paths.
As a result, the output is generally \emph{not} a subgraph of the original tree~$T$.

In contrast, for our testing procedures—most notably for maximal degree and number of leaves—it is crucial to recover the \emph{embedded} subtree $T_X\subseteq T$,
including all intermediate vertices lying on paths between sampled nodes.
Collapsing degree-$2$ vertices along paths would destroy local degree information
and prevent reliable testing of these properties.
This is why we work with a stronger oracle model that provides distances
$d(x,v)$ for all $x\in X$ and $v\in V(T)$.

\paragraph{An explicit recovery procedure.}
Lemma~\ref{lem:TX} shows that $T_X$ can be recovered from the $n|X|$ distance queries
$\{d(x,v): x\in X,\, v\in V(T)\}$ using only path-membership tests of the form
$d(a,b)=d(a,w)+d(w,b)$.
For completeness, we briefly summarize one concrete implementation.

\medskip
\noindent\textbf{Procedure \textsc{RecoverSpannedSubtree}$(X)$.}
Query $d(x,v)$ for all $x\in X$ and $v\in V(T)$.
Construct
\[
V(T_X) \;=\;\Bigl\{\,w\in V(T): \exists\,x,x'\in X
\ \text{such that}\ d(x,x')=d(x,w)+d(w,x')\,\Bigr\}~.
\]
For $u,v\in V(T_X)$, declare $uv\in E(T_X)$ if and only if $u\neq v$ and there is no
$z\in V(T_X)\setminus\{u,v\}$ satisfying $d(u,v)=d(u,z)+d(z,v)$.
The resulting graph $(V(T_X),E(T_X))$ is exactly the spanned subtree $T_X$.

Moreover, once $T_X$ is identified, all distances $d(u,v)$ with
$u\in V(T_X)$ and $v\in V(T)$ can be computed without further oracle queries:
for $v\notin V(T_X)$, letting $x\in X$ be a closest sampled vertex to $v$,
we have $d(u,v)=d(u,x)+d(x,v)$.

\medskip
This recovery step is used only as a subroutine.
Our main results concern testing global properties of $T$ with subquadratic query complexity,
not reconstructing the full tree.

\section{Maximum degree}\label{section: Testing maximal degrees}

In this section, we study the problem of testing the maximum degree of a tree~$T$. Our approach consists of sampling a small subset $X$ of nodes and computing the largest degree in the subtree $T_X$ spanned on this sample.

\subsection{Maximum degree testing procedure}

\noindent\textbf{Degree statistic.} Sample $X=\{X_1,\dots,X_N\}$ uniformly at random without replacement from $V(T)$. Query all $Nn-{N \choose 2}$ distances $d(x,v)$ for $x\in X$ and $v\in V(T)$. Compute the subtree $T_{X}=(V(T_X),E(T_X))$ spanned by $X$ as described in Section~\ref{sec:recover-TX}. For any $v \in V(T_X)$, define $\degree(v)$ to be the number of sampled points that are neighbors of $v$:
\[
\degree(v)=|X\cap\mathcal{N}_T(v)|~.
\]
We set
\[
\degree^* = \max_{v\in V(T_X)} \degree(v)~.
\]

\noindent\textbf{Decision rule.}
The test accepts $H_{\rm deg}$ if and only if
\[
\frac{\degree^*}{N} > \frac{\Delta}{n}(1 - \delta/2)~.
\]

\subsection{Proof of Theorem~\ref{thm: max degree test}}

Fix $0 < \delta,\epsilon < 1$, and let $\Delta \ge 1$. Consider the procedure described above to test $H_{\rm deg}$. We analyze the behavior of this test under the null hypothesis and the alternative.

\paragraph{Step 1: Behavior under $H_{\rm deg}$.}
Assume that there exists a vertex $v\in V(T)$ such that $\deg_T(v)\ge \Delta$.
Since the sample $X$ is drawn uniformly without replacement, the random variable
$|X\cap\mathcal{N}_T(v)|$ follows a hypergeometric distribution with parameters
$(n,N,\deg_T(v))$.
Using Bernstein’s inequality for sampling without replacement (see Theorem~4 in~\cite{Ho63}),
we obtain
\[
\prob{
\frac{|X\cap\mathcal{N}_T(v)|}{N}
<
\frac{\Delta}{n}(1-\delta/2)
}
\;\le\;
\exp\!\left(
-\frac{3\delta^2N\Delta}{4n(6+\delta)}
\right)~.
\]
Hence, if
\[
N \ge
\frac{4n(6+\delta)}{3\delta^2\Delta}
\log\!\left(\frac{1}{\epsilon}\right)~,
\]
then
\[
\prob{
\frac{\degree^*}{N}
\ge
\frac{\Delta}{n}(1-\delta/2)
}
\ge
1-\epsilon~.
\]

\paragraph{Step 2: Behavior under the alternative.}
Assume now that  every vertex in $T$ has degree at most $(1-\delta)\Delta$.
By the discussion above, for each $v\in V(T)$ the random variable $\degree(v)$
is stochastically dominated by a hypergeometric random variable $\mathcal{H}$
with parameters $(n,N,(1-\delta)\Delta)$.
Applying a union bound over all vertices yields
\begin{align*}
\prob{
\frac{\degree^*}{N}
>
\frac{\Delta}{n}(1-\delta/2)}
&\le
n\,\prob{
\mathcal{H}
>
\frac{N\Delta}{n}(1-\delta/2)} \\
&\le
n\exp\!\left(
-\frac{3\delta^2N\Delta}{4n(6-5\delta)}
\right)~.
\end{align*}
Therefore, if
\[
N \ge
\frac{4n(6-5\delta)}{3\delta^2\Delta}
\log\!\left(\frac{n}{\epsilon}\right)~,
\]
we obtain
\[
\prob{
\frac{\degree^*}{N}
>
\frac{\Delta}{n}(1-\delta/2)}
\le
\epsilon~.
\]
This shows that our procedure satisfies the claims of the theorem provided that
\[
N = \max\left\{
\frac{4n(6+\delta)}{3\delta^2\Delta}\log\!\left(\frac{1}{\epsilon}\right),
\quad
\frac{4n(6 - 5\delta)}{3\delta^{2}\Delta} \log\!\left(\frac{n}{\epsilon}\right)
\right\}~.
\]
Moreover, the number of queries needed to construct $T_X$ is $nN$. 
Hence, the query complexity of this test is 
\[
\Theta\!\left( \frac{n^2}{\Delta \delta^2}\log\!\left(\frac{n}{\epsilon}\right) \right)~.
\]

\section{Number of leaves}\label{sec: testing leaves}
To test the hypothesis $H_{\rm leaf}: |\mathcal{L}(T)| \ge \Lambda$, we follow a two-step approach
that parallels the maximal-degree test.
Starting from a random subset of vertices~$X$, 
after querying all $n|X|$ distances between vertices of $X$ and vertices of $V(T)$, we first reconstruct the
subtree $T_X$. The leaves of the subtree $T_X$ form a natural set of candidates for $\mathcal{L}(T)$.
However, partial reconstruction may introduce spurious candidates that are
not true leaves of the original tree.
We refine this candidate set by checking, for each $v\in X$,
whether it is adjacent to exactly one vertex in~$T$.
Since all distances from the vertices in $X$ to vertices in $V(T)$ are queried, this check can be performed by the following simple observation:

\begin{lemma}\label{lem:TX2}
    Consider the same assumptions as in Lemma~\ref{lem:TX}. 
Let $v$ be a leaf of $T_X$. Then $v$ is not a leaf of $T$ if and only if there exists
$u\in V(T)\setminus V(T_X)$ such that the closest point of $T_X$ to $u$ is $v$.
% Piotr: The earlier formulation was actually wrong I think.
    % Then, the tree distance from $u\notin V(T_X)$ to $T_X$ is minimized at $v\in V(T_X)$ if and only if $v$ is not a leaf of $T$.
\end{lemma}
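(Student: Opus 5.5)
The plan is to argue directly from the structure of $T$ with the subtree $T_X$ removed, using the attachment map $a(\cdot)$ from the proof of Lemma~\ref{lem:TX}. For $u\in V(T)\setminus V(T_X)$, let $a(u)\in V(T_X)$ be the unique vertex where the path from $u$ to $V(T_X)$ first meets $V(T_X)$. Since $T_X$ is connected, this is exactly the closest point of $T_X$ to $u$, for both graph distance and any tree metric. Indeed, every path from $u$ to a vertex $w\in V(T_X)$ passes through $a(u)$, so $d(u,w)=d(u,a(u))+d(a(u),w)>d(u,a(u))$ for $w\neq a(u)$. The minimizer is therefore unique and equals $a(u)$. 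With this identification, the statement becomes: a leaf $v$ of $T_X$ has $\deg_T(v)\ge 2$ if and only if $v=a(u)$ for some $u\notin V(T_X)$.

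For the direction ($\Leftarrow$), suppose $v=a(u)$ with $u\notin V(T_X)$. The path from $u$ to $v$ enters $v$ through a neighbour $w$ of $v$ with $w\notin V(T_X)$; this holds by the definition of $a(u)$ as the first vertex of $T_X$ met. Since $v$ is a leaf of $T_X$, it also has a neighbour inside $T_X$. This requires $|V(T_X)|\ge 2$, which we assume, i.e.\ $X$ has at least two distinct points. So $\deg_T(v)\ge 2$ and $v$ is not a leaf of $T$.

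For the direction ($\Rightarrow$), suppose $\deg_T(v)\ge 2$. Since $\deg_{T_X}(v)=1$, there is a neighbour $w$ of $v$ in $T$ with $w\notin V(T_X)$. Note that $T_X$ is an induced subtree: an edge of $T$ between two vertices of $T_X$ lies on the unique path between them, hence in $T_X$. So $vw\notin E(T_X)$ forces $w\notin V(T_X)$, and such a $w$ exists. Then the path from $w$ to $V(T_X)$ is the single edge $wv$, so $a(w)=v$. Taking $u=w$ proves the claim.

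The only delicate point is the degenerate case $|X|=1$, where $T_X$ is a single vertex $v$. Then $v$ has degree $0$ in $T_X$, so it is not a leaf in the usual sense, and the statement is vacuous. If one instead treats an isolated vertex as a leaf, the equivalence still holds for $n\ge 2$: both sides say that $v$ has some neighbour outside $T_X$, which is automatic, while the direction ($\Leftarrow$) would read ``degree $\ge 1$'' rather than ``$\ge 2$''. I would state the convention $|V(T_X)|\ge2$ explicitly. I would also remark that, by Lemma~\ref{lem:TX}, the condition is checkable from the queried distances, since $a(u)$ is identifiable there.
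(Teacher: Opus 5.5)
The paper states Lemma~\ref{lem:TX2} without proof, so there is no argument of its own to compare against. Your proof is correct and is the natural one. You reuse the attachment vertex $a(\cdot)$ from the proof of Lemma~\ref{lem:TX}, after checking that it is the unique closest point of $T_X$. You then add the observation that $T_X$ is an induced subtree of $T$, which is exactly what the ($\Rightarrow$) direction needs to place the extra neighbour $w$ of $v$ outside $V(T_X)$. The ($\Leftarrow$) direction is also sound: the vertex preceding $v=a(u)$ on $\overline{uv}$ lies outside $T_X$, so it differs from the unique $T_X$-neighbour of $v$, and hence $\deg_T(v)\ge 2$.

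Two small corrections. First, $|V(T_X)|\ge 2$ is not an additional assumption. Under the paper's convention that a leaf has degree exactly $1$, the hypothesis that $v$ is a leaf of $T_X$ already forces it.

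Second, your closing remark about the convention in which an isolated vertex counts as a leaf is inaccurate. Take $T_X=\{v\}$ with $\deg_T(v)=1$. The right-hand side holds (let $u$ be the unique neighbour of $v$), but $v$ is a leaf of $T$, so the equivalence fails. Your own parenthetical (``degree $\ge 1$'' rather than ``$\ge 2$'') describes exactly this failure. The correct conclusion is that the lemma requires $\deg_{T_X}(v)=1$, and in the leaf test this holds as soon as $X$ contains two distinct vertices.

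Optionally, you could add one line justifying that every path from $u$ to $V(T_X)$ passes through $a(u)$. The component of $T$ minus $V(T_X)$ containing $u$ is joined to $T_X$ by a single edge. Two such edges, together with the connectivity of that component and of $T_X$, would close a cycle.
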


This lemma gives us an explicit way of checking if a leaf of $T_X$ is a leaf of $T$.

\subsection{Number-of-leaves testing procedure}

The proposed procedure is defined as follows. 

\medskip
\noindent\textbf{Test statistic.} Sample $X=\{X_1,\dots,X_N\}$ uniformly at random without replacement from $V(T)$.
Compute the subtree $T_X$ from $X$, and run a test to check which leaves of $T_X$ are also leaves of $T$. Let
\[
\mathcal{L}_X = X \cap \mathcal{L}(T)~,
\]
and set $L^* = |\mathcal{L}_X|$.

\medskip
\noindent\textbf{Decision rule.}  The test accepts $H_{\rm leaf}$ if
\[
\frac{L^*}{N} > \frac{\Lambda}{n}(1-\delta/2)~,
\]
and rejects otherwise.

\subsection{Proof of Theorem~\ref{thm: leaves test}}

    Fix $0<\delta,\epsilon<1$, and let $\Lambda\ge 1$ be an integer. 
    Consider a tree $T$ on $n$ vertices and run the procedure described above to test $H_{\rm leaf}$.     By construction,
    \[
        L^* = |X \cap \mathcal{L}(T)|~.
    \]
    Thus, $L^*$ follows a hypergeometric distribution with parameters $(n,N,|\mathcal{L}(T)|)$.
    We analyze the behavior of this test under the null hypothesis and the alternative.
    
\paragraph{Step 1: Behavior under $H_{\rm leaf}$.} 
    If $|\mathcal{L}(T)| > \Lambda$, Bernstein's inequality for hypergeometric distributions yields
    \begin{equation}\label{eq: leaf inequality 1}
        \mathbb{P}\!\left(\frac{L^*}{N} < \frac{\Lambda}{n}(1-\delta/2)\right)
        \le \exp\!\left(\!-\frac{3\delta^2 N \Lambda}{4n(6+\delta)}\right)~.
    \end{equation}
\paragraph{Step 2: Behavior under the alternative.}    If $|\mathcal{L}(T)| < \Lambda(1-\delta)$, then
    \begin{align}
        \mathbb{P}\!\left(\frac{L^*}{N} > \frac{\Lambda}{n}(1-\delta/2)\right) 
        &= \mathbb{P}\!\left(\frac{L^*}{N} > \frac{\Lambda(1-\delta)}{n}\left(1+\frac{\delta}{2(1-\delta)}\right)\right) \nonumber \\
        &\le \exp\!\left(\!-\frac{3\delta^2 N \Lambda}{4n(6-5\delta)}\right)~. \label{eq: leaf inequality 2}
    \end{align}
    Combining \eqref{eq: leaf inequality 1} and \eqref{eq: leaf inequality 2}, we see that choosing
    \[
        N 
        = \max\left\{
            \frac{4n(6+\delta)}{3\delta^2 \Lambda}\log\!\left(\frac{1}{\epsilon}\right),
            \quad
            \frac{4n(6-5\delta)}{3\delta^2 \Lambda}\log\!\left(\frac{1}{\epsilon}\right)
        \right\}~,
    \]
    guarantees that the test succeeds with probability at least $1-\epsilon$ whenever 
    \[
        |\mathcal{L}(T)| > \Lambda 
        \quad \text{or} \quad 
        |\mathcal{L}(T)| < \Lambda(1-\delta)~.
    \]
    Moreover, the number of queries needed to run this test is $nN$, leading to a query complexity of
\[
    \Theta\!\left( \frac{n^2}{\Lambda\,\delta^2}\log\!\left(\frac{1}{\epsilon}\right)\right)~.
\]

\section{Typical distance}\label{sec: Testing typical distance}

As in Section~\ref{sec: problem setup}, we are given a threshold $\ell$ and a tolerance parameter $\delta$. We aim to design a randomized test for $H_{\rm typ}: \ell_{\rm typ}(T)\geq \ell$. We develop two complementary testing procedures, suited to different regimes of the threshold~$\ell$.
The first approach, presented in Section~\ref{sec: typ distance 1}, computes exact path lengths between sampled vertices and constructs a $U$-statistic centered at $\ell_{\mathrm{typ}}$.
The second approach, developed in Section~\ref{sec: typ dist 2}, replaces exact path lengths by estimators based on sampled path intersections.
Theorem~\ref{thm: typ dist} follows by combining these two procedures and selecting the one with smaller query complexity for the given regime of~$\ell$.

\subsection{A test based on a U-statistic}
\label{sec: typ distance 1}

Let $0<\delta,\epsilon<1~.$ Fix an integer $N \ge 1$, and let $X = \{X_1, \dots, X_N\}$ be a set of $N$ independent vertices sampled uniformly from $T$. 
Observe that by querying the distances $d(x,v)$ for every sampled vertex $x \in X$ and every vertex $v \in V(T)$, one can compute the exact lengths of the paths between any two sampled nodes:
\[
\ell(X_i,X_j)
= \left| \left\{ v\in V(T): d(X_i,X_j)=d(X_i,v)+d(v,X_j)\right\} \right|~.
\]

\noindent Consequently, after performing $nN$ queries, we may define the estimator
\[
\ell_1^* \;:=\; \frac{2}{N(N-1)} \sum_{1\le i<j\le N} \ell (X_i,X_j)~.
\]
Based on $\ell_1^*$, we consider the following test: compute $\ell_1^*$ and \emph{accept} $T$ whenever  
\[
\ell_1^* > \ell(1 - \delta/2)~,
\]
and \emph{reject} otherwise.
If $\diam{T}$ were known, one could choose
\[
N \;\ge\; \left(\frac{4\,\diam{T}}{\delta \ell}\right)^{\!2}
\log\!\left( \frac{1}{\epsilon} \right)~,
\]
and conclude by applying Lemma~\ref{lemma: u-stat} below.  
However, when no prior information on $\diam{T}$ is available, we may apply Corollary~\ref{corolary: diameter} to obtain a random value $D$ satisfying
\[
\diam{T}\in [\,D(1-\delta),\, D/(1-\delta)\,]
\]
with probability at least $1-\epsilon/2$.
We therefore choose
\[
N \;\ge\; 
\left( \frac{4\,D/(1-\delta)}{\delta\ell} \right)^{\!2}
\log\!\left(\frac{2}{\epsilon}\right)~,
\]
sample a fresh set $X$ of $N$ independent uniformly random vertices (independent of the computation of $D$), and then run the test above.  
With probability at least $1-\epsilon/2$, this choice guarantees that
\begin{equation}\label{eq: conditon on N}
    N \;\ge\; \left(\frac{4\,\diam{T}}{\delta\ell}\right)^{\!2}
    \log\!\left(\frac{2}{\epsilon}\right)~.
\end{equation}
Conditioning on the event that \eqref{eq: conditon on N} holds, we conclude by applying Lemma~\ref{lemma: u-stat} that the test succeeds with probability at least $1-\epsilon$ whenever $\ell_{\textrm{typ}} \ge \ell$ or $\ell_{\textrm{typ}} \le \ell(1-\delta)~.$

\medskip
\noindent
Moreover, with probability at least $1-\epsilon$ the number of queries made to compute $D$ is $$\Theta\left(\frac{1}{\delta^5}\left(\frac{n\log(\frac{n^2}{\epsilon})}{\diam{T}}\right)^2\right)$$
and the number of queries made to compute $\ell_1^*$ is 
$$nN=\Theta\left( n\left(\frac{\diam{T}}{\delta\ell}\right)^2\log\!\frac{1}{\epsilon}\right)~,$$
with probability at least $1-\epsilon~.$
Hence, the total number of queries needed for this procedure is
\[
\Theta\left( n\left(\frac{\diam{T}}{\delta\ell}\right)^2\log\!\frac{1}{\epsilon}
+
\frac{1}{\delta^5}\left(\frac{n\log(\frac{n^2}{\epsilon})}{\diam{T}}\right)^2
\right)
\]
with probability at least $1-\epsilon~.$

\begin{lemma}\label{lemma: u-stat}
    Let $\delta,\epsilon>0$ and let $\ell\ge1$ be an integer. If $$N\ge\left(\frac{4\,\diam{T}}{\delta\ell}\right)^2\log\left(\frac{1}{\epsilon}\right)~,$$
    then
    \begin{itemize}
        \item [(i)] $\prob{\ell_1^*\ge \ell(1-\delta/2)}\ge 1-\epsilon~$ when $\ell_{\textrm{typ}}\ge \ell~,$
        \item [(ii)] $\prob{\ell_1^*\ge \ell(1-\delta/2)}\le \epsilon~$ when $\ell_{\textrm{typ}}\le\ell(1-\delta)~.$
    \end{itemize}
\end{lemma}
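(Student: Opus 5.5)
Since $X_1,\dots,X_N$ are i.i.d.\ uniform on $V(T)$, the estimator $\ell_1^*$ is a U-statistic of order two with symmetric bounded kernel $h(u,v)=\ell(u,v)$. I will apply Hoeffding's inequality for U-statistics. First, compute the mean: $\mathbb{E}\,\ell(X_i,X_j)=\frac{1}{n^2}\sum_{u,v}\ell(u,v)=\ell_{\mathrm{typ}}$ for $i\neq j$, because the pair $(X_i,X_j)$ is uniform on $V(T)^2$ (the diagonal $u=v$ is included, matching the definition of $\ell_{\mathrm{typ}}$). Second, bound the range: $1\le \ell(u,v)\le \diam{T}$, so $h$ takes values in an interval of length at most $\diam{T}$.

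Hoeffding's U-statistic bound (1963) then gives, for $t>0$,
\[
\prob{\ell_1^*-\ell_{\mathrm{typ}}\le -t}\le \exp\!\left(-\frac{2\lfloor N/2\rfloor t^2}{\diam{T}^2}\right),
\]
and the same bound for the upper tail. It comes from writing $\ell_1^*$ as an average over permutations of means of $\lfloor N/2\rfloor$ independent kernel evaluations and applying convexity of the exponential. I use $2\lfloor N/2\rfloor\ge N/2$ for $N\ge2$; the case $N=1$ does not arise, because the hypothesis forces $N$ to be large whenever $\log(1/\epsilon)>0$.

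For (i), assume $\ell_{\mathrm{typ}}\ge\ell$. Then $\ell_1^*<\ell(1-\delta/2)$ forces $\ell_1^*-\ell_{\mathrm{typ}}<-\delta\ell/2$. Taking $t=\delta\ell/2$, the failure probability is at most $\exp(-N\delta^2\ell^2/(8\diam{T}^2))$. This is at most $\epsilon$ whenever $N\ge 8(\diam{T}/(\delta\ell))^2\log(1/\epsilon)$, and the lemma's hypothesis supplies the constant $16\ge 8$. For (ii), assume $\ell_{\mathrm{typ}}\le\ell(1-\delta)$. Then $\ell_1^*\ge\ell(1-\delta/2)$ forces $\ell_1^*-\ell_{\mathrm{typ}}\ge\delta\ell/2$, and the upper-tail bound with the same $t$ gives the same conclusion.

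The main obstacle is bookkeeping of constants. The floor $\lfloor N/2\rfloor$ and the factor $2$ in Hoeffding's exponent must combine to stay within the factor $16$ in the hypothesis; they do, with room to spare. One must also check that the kernel range is $\diam{T}-1\le\diam{T}$, not something larger. No structural facts about trees are needed beyond $\ell\le\diam{T}$.
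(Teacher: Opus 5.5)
Your proposal is correct and follows the paper's proof: both reduce (i) and (ii) to a one-sided deviation of $\ell_1^*$ from $\ell_{\mathrm{typ}}$ of size $\delta\ell/2$, then apply Hoeffding's inequality for $U$-statistics with $\lfloor N/2\rfloor$ independent blocks. The only difference is constant bookkeeping: you use that the uncentered kernel ranges over an interval of length at most $\diam{T}$, whereas the paper bounds the centered kernel by $\|h\|_\infty\le\diam{T}$ (range $2\diam{T}$), so your exponent is four times larger and comfortably absorbs the floor $\lfloor N/2\rfloor$, which the paper's constant $16$ does not quite do for odd $N$ (your aside that the hypothesis forces $N\ge 2$ is not literally true, e.g.\ for $\epsilon$ close to $1$, but $N\ge 2$ is implicit anyway since $\ell_1^*$ is undefined for $N=1$).
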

\begin{proof}
\textbf{Step 1: Behavior under $H_{\rm typ}$.}  Suppose that $\ell_{\mathrm{typ}} \ge \ell$. We have
\begin{equation}\label{eq: ustat 1}
\prob{\ell_1^* \le \ell(1 - \delta/2)} 
\le \prob{\ell_1^* \le \ell_{\mathrm{typ}} - \delta \ell / 2}~.
\end{equation}
Observe that for any $1 \le i < j \le N$, we have $\mathbb{E}[\ell(X_i, X_j)] = \ell_{\mathrm{typ}}$. 
Hence,
\[
\ell_1^* - \ell_{\mathrm{typ}} 
= \frac{2}{N(N - 1)} \sum_{1 \le i < j \le N} \bigl(\ell(X_i, X_j) - \ell_{\mathrm{typ}}\bigr)
\]
is a $U$-statistic of order~2 with kernel 
\[
h:(x, y) \mapsto \ell(x, y) - \ell_{\mathrm{typ}}~.
\]
Moreover, $\|h\|_\infty \le \mathrm{diam}(T)$ and $\mathbb{E}[h(X_1, X_2)] = 0$. 
By Hoeffding's inequality for $U$-statistics (see ~\cite{Ho63}), it follows that
\begin{equation}\label{eq: u-stat}
\prob{\ell_1^* - \ell_{\mathrm{typ}} < - \delta \ell / 2}
\le \exp\!\left(
    -\frac{\lfloor N/2\rfloor (\delta \ell)^2}{8\,\mathrm{diam}(T)^2}
\right)~,
\end{equation}
where $\lfloor N/2\rfloor$ denotes the integer part of $N/2$. 
Hence, choosing
\[
N \ge \left(\frac{4\,\mathrm{diam}(T)}{\delta \ell}\right)^2 
    \log\!\left(\frac{1}{\epsilon}\right)~,
\]
it follows from~\eqref{eq: ustat 1} and~\eqref{eq: u-stat} that
\[
\prob{\ell_1^* \le \ell(1 - \delta/2)} \le \epsilon~.
\]

\medskip
\noindent \textbf{Step 2: Behavior under the alternative.} Now assume that $\ell_{\mathrm{typ}} \le \ell(1 - \delta)$. Then,
\begin{align}
\prob{\ell_1^* \ge \ell(1 - \delta/2)} 
&= \prob{\ell_1^* \ge \ell(1 - \delta) + \delta \ell / 2} \nonumber\\
&\le \prob{\ell_1^* \ge \ell_{\mathrm{typ}} + \delta \ell / 2}~.
\label{eq: u-stat 3}
\end{align}
Applying Hoeffding's inequality for U-statistics again, we obtain
\begin{equation}\label{eq: u-stat hoeffding}
\prob{\ell_1^* - \ell_{\mathrm{typ}} \ge \delta \ell / 2}
\le \exp\!\left(
    -\frac{\lfloor N/2\rfloor (\delta \ell)^2}{8\,\mathrm{diam}(T)^2}
\right),
\end{equation}
which, together with~\eqref{eq: u-stat 3}, implies that by choosing
\[
N \ge \left(\frac{4\,\mathrm{diam}(T)}{\delta \ell}\right)^2 
    \log\!\left(\frac{1}{\epsilon}\right)~,
\]
we have
\[
\prob{\ell_1^* \ge \ell(1 - \delta/2)} \le \epsilon~.
\]
\end{proof}

\subsection{Testing via estimated path length averages}\label{sec: typ dist 2}
Fix $\epsilon> 0 $ and $0<\delta<1~.$ Let $N\ge 1$ be an integer and let $X = \{X_1,\dots,X_N\}$ be a set of $N$ independent vertices drawn uniformly from $T~.$
For any pair of vertices $u, v \in T$, recall the definition from Section~\ref{section: Testing diameters}:
$$\widetilde{\ell}_X(u,v) := \left| \left\{ 1 \le i \le N : d(u,v) = d(u, X_i) + d(X_i, v) \right\} \right|~.$$
Define $$\ell^*_2 =\frac{2}{N(N-1)}\sum_{1\le i<j\le N} \left(\widetilde{\ell}_X(X_i,X_j) -2\right)~,$$ and consider the test that \emph{accepts} $T$ whenever $$\frac{\ell_2^*}{N-2}>\frac{ \ell}{n}(1-\delta/2)~,$$ and \emph{rejects} $T$ otherwise. 
Let $$V_N= \frac{2}{N(N-1)}\sum_{1\le i < j \le N} \left(\frac{1}{N-2}(\widetilde{\ell}_X(X_i,X_j) -2) - \frac{1}{n}\ell(X_i,X_j)\right)~,$$
so that 
\begin{equation}\label{eq: typ distance decomposition}
    \frac{\ell_2^*}{N-2} = \frac{\ell_{\textrm{typ}}}{n} + \frac{1}{n}(\ell_1^*-\ell_{\textrm{typ}}) + V_N~.
\end{equation}

\noindent By using Hoeffding's inequality for U-statistics we have that 
\begin{equation}\label{eq: typ distance path averages 1}
    \prob{|\ell_1^*-\ell_{\textrm{typ}}|\ge \frac{\ell\delta}{4}}\le 2\exp\left(-\frac{\lfloor N/2\rfloor(\ell\delta)^2}{32\diam{T}^2} \right)~,
\end{equation}
and from Lemma \ref{lemma: typ dist bernoulli concentration}, it follows that
\begin{equation}\label{eq: typ distance path averages 2}
    \prob{|V_N|>\frac{\ell\delta}{4n}}\le N^2\exp\left( \frac{-(N-2)(\delta\ell)^2}{8n\cdot\diam{T}+2n(\delta\ell)/3}\right)~.
\end{equation}
Hence, if $\diam{T}$ is known, one can choose $$N=\Theta\left( \frac{n\cdot\diam{T}}{(\delta\ell)^2} \log\frac{1}{\epsilon}\right)~,$$
so that 
\begin{equation}\label{eq: typ distance split proba}
    \prob{|\ell_1^*-\ell_{\textrm{typ}}|\ge \frac{\ell\delta}{4}} + \prob{|V_N|\ge \frac{\ell\delta}{4n}}\le \epsilon~.
\end{equation}
If $\diam{T}$ is unknown, one may follow the same strategy as in Section~\ref{sec: typ distance 1} and apply an independent algorithm from Corollary~\ref{corolary: diameter} to obtain a random variable $D$ such that 
\[
\diam{T}\in [D(1-\delta), D/(1-\delta)]
\]
with probability at least $1-\epsilon/2~.$ One can then choose $$N=\Theta \left(\frac{n\cdot D}{(\delta\ell)^2}\log\frac{2}{\epsilon}\right)~,$$ and inequality \eqref{eq: typ distance split proba} still holds.

\noindent \textbf{Step 1: Behavior under $H_{\rm typ}$.}  Suppose that $\ell_{\textrm{typ}}\ge \ell$. From \eqref{eq: typ distance decomposition} it follows that $$\frac{\ell_2^*}{N-2}\ge \frac{\ell}{n}\left( 1 + \frac{1}{\ell}(\ell_1^*-\ell_{\textrm{typ}})+\frac{n V_N}{\ell}\right)~.$$ Hence,
\begin{equation*}
\prob{\frac{\ell_2^*}{N-2}\le \frac{\ell}{n}(1-\delta/2)}\le \prob{\ell_1^* - \ell_{\textrm{typ}}\le-\frac{\ell \delta}{4}} + \prob{V_N\le -\frac{\ell \delta}{4n}}~.   
\end{equation*}
From \eqref{eq: typ distance split proba}, we have 
$$\prob{\frac{\ell_2^*}{N-2}\le \frac{\ell}{n}(1-\delta/2)} \le \epsilon~.$$

\noindent
\textbf{Step 2: Behavior under the alternative.}  Suppose that $\ell_{\textrm{typ}}\le \ell(1-\delta)$. 
From \eqref{eq: typ distance decomposition} we have 
$$\frac{\ell_2^*}{N-2}\le \frac{\ell(1-\delta)}{n}\left( 1 + \frac{(\ell_1^*-\ell_{\textrm{typ}})}{\ell(1-\delta)} + \frac{nV_N}{\ell(1-\delta)}\right)~.$$ 
Thus, 
\begin{align}
    \prob{\frac{\ell_2^*}{N-2}\ge \frac{\ell}{n}(1-\delta/2)} &\le \prob{1+\frac{\ell_1^*-\ell_{\textrm{typ}}}{\ell(1-\delta)}+\frac{nV_N}{\ell(1-\delta)}\ge 1 + \frac{\delta}{2(1-\delta)}} \nonumber \\
    &\le \prob{\ell_1^*-\ell_{\textrm{typ}} \ge \frac{\ell\delta}{4}} + \prob{V_N\ge \frac{\ell\delta}{4n}}~.\nonumber
\end{align}
From \eqref{eq: typ distance split proba} we conclude that 
$$\prob{\frac{\ell_2^*}{N-2}\ge \frac{\ell}{n}(1-\delta/2)}\le \epsilon~.$$
Hence, this testing algorithm is correct with probability $1-\epsilon$ whenever $\ell_{\textrm{typ}}\ge \ell$ or $\ell_{\textrm{typ}}\le \ell(1-\delta).$ Moreover, the number of queries needed to compute $D$ is
$$\Theta\left(\frac{1}{\delta^5}\left(\frac{n\log(\frac{n^2}{\epsilon})}{\diam{T}}\right)^2\right)~,$$
and the number of queries needed to compute $\ell_2^*$ is 
\[ N^2= 
\Theta\!\left(
\left( \frac{n\cdot \diam{T}}{(\delta\ell)^2}\log\frac{1}{\epsilon} \right)^2
\right)
\]
with probability at least $1-\epsilon~.$ Hence, the total number of queries needed to run this procedure is 
\[
\Theta\!\left(
\left( \frac{n\cdot \diam{T}}{(\delta\ell)^2}\log\frac{1}{\epsilon} \right)^2
\;+\;
\frac{1}{\delta^5}\left(\frac{n\log(\frac{n^2}{\epsilon})}{\diam{T}}\right)^2
\right)
\]
with probability at least $1-\epsilon~.$
\begin{lemma}\label{lemma: typ dist bernoulli concentration}
For any $t>0$, $$\prob{|V_N|>t} \;\le\; N^2\exp \left(\frac{-(N-2)t^2/2}{\diam{T}/n + t/3} \right)~.$$
\end{lemma}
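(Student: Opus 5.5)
The plan is to reduce $V_N$ to a maximum of conditionally centered binomial averages and then combine Bernstein's inequality with a union bound over pairs. Write
\[
V_N=\frac{2}{N(N-1)}\sum_{1\le i<j\le N} Z_{ij},\qquad
Z_{ij}:=\frac{\widetilde{\ell}_X(X_i,X_j)-2}{N-2}-\frac{\ell(X_i,X_j)}{n}~.
\]
Since $V_N$ is an average of the $Z_{ij}$, the event $\{|V_N|>t\}$ forces $|Z_{ij}|>t$ for at least one pair $i<j$. Hence
\[
\prob{|V_N|>t}\le \sum_{i<j}\prob{|Z_{ij}|>t}~.
\]
It therefore suffices to show that each term is at most $2\exp\bigl(-(N-2)t^2/2/(\diam{T}/n+t/3)\bigr)$. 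The prefactor is then $2\binom{N}{2}=N(N-1)\le N^2$.

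To bound a single term, fix $i<j$ and condition on $(X_i,X_j)$. The indices $i$ and $j$ always contribute to $\widetilde{\ell}_X(X_i,X_j)$, because $d(X_i,X_j)=d(X_i,X_i)+d(X_i,X_j)$, and similarly for $j$. This remains true when $X_i=X_j$. Thus
\[
\widetilde{\ell}_X(X_i,X_j)-2=\sum_{k\ne i,j}\mathbf 1\{X_k\in\overline{X_iX_j}\}~.
\]
By independence of the sample, conditionally on $(X_i,X_j)$ the $N-2$ indicators are i.i.d.\ Bernoulli with parameter $p=\ell(X_i,X_j)/n$. This holds even if some $X_k$ coincides with $X_i$ or $X_j$. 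Consequently $Z_{ij}$ is, conditionally, the centered empirical mean of $N-2$ i.i.d.\ Bernoulli$(p)$ variables.

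Bernstein's inequality, as in Section~2.8 of~\cite{BoLuMa03}, then applies. The summands are bounded by $1$ and have variance $p(1-p)\le p\le \diam{T}/n$. This gives
\[
\prob{|Z_{ij}|>t\mid X_i,X_j}\le 2\exp\!\left(\frac{-(N-2)t^2/2}{\diam{T}/n+t/3}\right)~.
\]
The right-hand side does not depend on $(X_i,X_j)$, so taking expectations removes the conditioning.

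The only delicate point is checking the conditional i.i.d.\ Bernoulli structure. Specifically, one must confirm that subtracting exactly $2$ removes precisely the deterministic contributions of $i$ and $j$, including in degenerate coincidences of sampled vertices. One must also confirm that the success probability is exactly $\ell(X_i,X_j)/n$, because $X_k$ is uniform and $\overline{X_iX_j}$ has $\ell(X_i,X_j)$ vertices. Once this is in place, the rest is the standard Bernstein bound followed by the union bound.
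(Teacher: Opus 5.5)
Your proposal is correct and follows essentially the same route as the paper. Both arguments take a union bound over the $\binom{N}{2}$ pairs, note that conditionally on $(X_i,X_j)$ the quantity $\widetilde{\ell}_X(X_i,X_j)-2$ is $\mathrm{Binomial}(N-2,\ell(X_i,X_j)/n)$, apply Bernstein's inequality at deviation $(N-2)t$ with variance at most $(N-2)\diam{T}/n$, and drop the conditioning because the resulting bound is deterministic (your explicit treatment of the case $X_i=X_j$ and of the variance $p(1-p)\le\diam{T}/n$ is slightly more careful than the paper's write-up).
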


\begin{proof}
Fix $t>0$. By a simple union bound we have
\begin{align}
    \prob{|V_N|>t} \le \frac{N^2}{2}\prob{\left| \widetilde{\ell}_X(X_1,X_2) -2- \frac{N-2}{n}\ell(X_1,X_2)\right|>(N-2)t} ~.\label{eq: typ dist V}
\end{align}
Conditioned on $(X_1,X_2)$, the random variable $\widetilde{\ell}_X(X_1,X_2)-2$ follows a binomial distribution with parameters $N-2$ and $\ell(X_1,X_2)/n$. Bernstein's inequality implies, for any fixed $s>0$,
\begin{align*}
    \mathbf{P}\bigg( \bigg| \widetilde{\ell}_X(X_1,X_2)- 2& - \frac{N-2}{n}\ell(X_1,X_2)\bigg|>s \bigg{|} X_1,X_2\bigg)  \\
    &\le 2\exp \left( \frac{-s^2/2}{\frac{(N-2)\ell(X_1,X_2)}{n}\left(1-\frac{(N-2)\ell(X_1,X_2)}{n}\right)+s/3} \right)~.
\end{align*}
Using the fact that $\ell(X_1,X_2)$ is bounded by $\diam{T}$, it follows that 

\begin{align*}
    \mathbf{P}_\bigg( \bigg| \widetilde{\ell}_X(X_1,X_2)- 2 - \frac{N-2}{n}&\ell(X_1,X_2)\bigg|>s\bigg{|} X_1,X_2 \bigg) \nonumber \\
    &\le 2\exp \left(\frac{-s^2/2}{(N-2)\diam{T}/n + s/3} \right)~.
\end{align*}
Since the right-hand side of the inequality is deterministic, we conclude that 
\begin{equation*}
\prob{\bigg| \widetilde{\ell}_X(X_1,X_2)- 2 - \frac{N-2}{n}\ell(X_1,X_2)\bigg|>s} \le 2\exp \left(\frac{-s^2/2}{\frac{(N-2)\diam{T}}{n} + s/3} \right)~.
\end{equation*}
Together with \eqref{eq: typ dist V}, it follows that
\begin{equation}
    \prob{|V_N|>t} \le N^2\exp \left(\frac{-(N-2)t^2/2}{\diam{T}/n + t/3} \right)~,
\end{equation}
which concludes the proof.

\end{proof}

\bibliographystyle{plainnat}
\bibliography{biblio.bib}
\end{document}